\def\ourmethod{\textsl{QP-WL}\xspace} 
\def\quantilenpl{\textsl{QP-NPL}\xspace} 
\def\maxnpl{\textsl{MP-NPL}\xspace} 
\def\quantileclass{\textsl{QP-CLS}\xspace} 
\def\jager{\textsl{Jager et al. (2019)}\xspace} 
\def\bx{\mathbf{x}}
\newcommand{\ceil}[1]{\left\lceil #1 \right\rceil}
\newcommand{\sign}{\text{sign}}
\begin{document}

\title{Deep Distributional Sequence Embeddings Based on a Wasserstein Loss}
\subtitle{}
\author{Ahmed Abdelwahab \and Niels Landwehr}
\institute{
A. Abdelwahab \email{AAbdelwahab@atb-potsdam.de} \at
Leibniz Institute of Agricultural Engineering and Bioeconomy e.V., Potsdam, Germany\\
\and
N. Landwehr \email{Landwehr@cs.uni-potsdam.de} \at
University of Potsdam, Department of Computer Science, Potsdam, Germany\\
Leibniz Institute of Agricultural Engineering and Bioeconomy e.V., Potsdam, Germany\\
}

\date{Received: date / Accepted: date}

\maketitle

\begin{abstract}
Deep metric learning employs deep neural networks to embed instances into a metric space such that distances between instances of the same class are small 
and distances between instances from different classes are large.
In most existing deep metric learning techniques, the embedding of an instance is given by a feature vector produced by a deep neural network
and Euclidean distance or cosine similarity defines distances between these vectors.
In this paper, we study deep distributional embeddings of sequences, where the embedding of a sequence is given by the distribution of learned deep features
across the sequence. This has the advantage of capturing statistical information about the distribution of patterns within the sequence in the embedding.
When embeddings are distributions rather than vectors, measuring distances between embeddings involves comparing their respective distributions.
We propose a distance metric based on Wasserstein distances between the distributions and a corresponding loss function for metric learning, which leads to a novel end-to-end 
trainable embedding model.
We empirically observe that distributional embeddings outperform standard vector embeddings and that training with the 
proposed Wasserstein metric outperforms training with other distance functions.
\end{abstract}

\section{Introduction}
\label{sec:introduction}

Metric learning is concerned with learning a representation or \emph{embedding} in which distances between instances of the same class are small and distances between instances of different classes are large. Deep metric learning approaches, in which the learned embedding is given by a deep neural network, have achieved state-of-the-art results in many tasks, including face verification and recognition~\citep{schroff2015facenet}, fine-grained image classification~\citep{reed2016learning}, zero-shot classification~\citep{bucher2016improving}, speech-to-text problems~\citep{gibiansky2017deep}, and speaker identification~\citep{li2017deep}. 
An advantage of metric learning is that the resulting representation directly generalizes to unseen classes, so the model does not need to be retrained every time a new class is introduced. This is, for example, a typical requirement in biometric applications, where it should be possible to register new subjects without retraining a model. Biometric systems also have to handle imposters, that is, subjects who are not registered in the database, which is not straightforward in standard classification settings.

In this paper, we study deep metric learning for sequence data, with a specific focus on biometric problems. Building on earlier work on 
\emph{quantile layers}~\citep{abdelwahab2019Quantile}, we specifically study how the distribution of learned deep features across a sequence can be represented in the learned
embedding. Quantile layers are statistical aggregation layers that characterize the distribution of %short, local 
patterns within a sequence by approximating the quantile
function of the activations of the learned filters across the sequence. Characterizing this distribution has been shown to be advantageous for biometric identification based 
on eye movement patterns~\citep{abdelwahab2019Quantile}. 
The main contribution of this paper is to develop a deep metric learning approach for distributional embeddings based on quantile layers.
Quantile layers return an estimate of the distribution of values for each learned filter across the sequence. 
Instead of a fixed-length vector representation of an instance, in our approach, the embedding of an instance is given by these sets of distributions.
When embeddings are distributions rather than simple vectors, measuring distances between the embeddings involves comparing their respective distributions. 
We propose a distance metric in the embedding space that is based on Wasserstein distances between the respective distributions. Compared to other distance functions such 
as Kulback-Leibler or Jensen-Shannon divergence, the advantage of using Wasserstein distance is that it takes into account the metric on the space in which the random variable of interest is defined. 
In our case, this means that distributions in which similar magnitudes of filter activations receive similar amounts of probability
mass will be considered close. 
We show how such embeddings can be trained end-to-end on labeled training data using metric learning techniques.

Empirically, we study the proposed approach in biometric identification problems involving eye movement, accelerometer, and EEG data. Empirical results show that  
the proposed distributional sequence embeddings outperform standard vector embeddings and that training with the 
Wasserstein metric outperforms training with other distance functions.

\sloppy
The rest of the paper is organized as follows. Section~\ref{sec:related_work} discusses related work. 
In Section~\ref{sec:quantile_layers} we review quantile layers and develop a distributional embedding architecture based on these layers. 
Section~\ref{sec:distance_function} introduces a Wasserstein-based distance metric for the proposed embedding model and from this derives a novel loss function for metric learning.
In Section~\ref{sec:empirical_study} we empirically study the proposed method and baselines.
\fussy

\section{Related work}
\label{sec:related_work}

Our work is motivated by the goal of capturing information about the distribution of patterns within a sequence in its embedding,
where the patterns are defined in terms of learned features of a deep neural network. 
It is related to other work in deep learning that aims to capture distributions of learned features using statistical aggregation layers.
\cite{wang2016learnable} proposed end-to-end learnable \emph{histogram layers} that approximate the distribution of learned features by a histogram.
Their work uses linear approximations to smoothen the sharp edges in a traditional histogram function and enable gradient flow. 
\cite{sedighi2017histogram} proposed a similar histogram-based aggregation layer, but use Gaussian kernels as a soft, differentiable 
approximation to histogram bins. \cite{abdelwahab2019Quantile} introduced \emph{quantile layers} to capture the distribution of learned features
based on an approximation of the quantile function, and empirically showed that this outperforms aggregation using histograms.
The contribution of our paper is to exploit quantile layers in metric learning, by defining distributional embeddings based on 
approximations of quantile functions and deriving loss functions for metric learning based on comparing the resulting distributions.

There is a large body of work on deep metric learning that studies different network architectures and loss functions.
For example, \cite{hadsell2006dimensionality} introduced a loss for a siamese network architecture that is based on all possible pairs
of instances in the training data, and its objective is to minimize distances between positive pairs (same class) while maximizing the distances between negative pairs (different classes). 
More recently, \cite{schroff2015facenet} introduced the triplet loss,
with links positive and negative pairs by an anchor instance. This idea has later been extended by \cite{oh2016deep} and \cite{sohn2016improved} by providing several
negative pairs linked to one positive pair to the loss function. The loss function introduced by \cite{sohn2016improved} has shown superior performance
in several studies~\citep{sohn2016improved,wu2017sampling,yuan2017hard}.
Our method builds on these established deep metric learning techniques,
but extends them by replacing vector embeddings with distributional embeddings, which
requires corresponding changes in distance calculations and the loss function.

Distributional embeddings have also been studied in natural language processing in the context of word embeddings.
Traditional word embedding models such as \emph{word2vec} represent words as vectors in a metric space such 
that semantically similar words are mapped to similar vectors~\citep{mikolov2013distributed}. 
\cite{vilnis2014word} extend this idea by mapping each word to a Gaussian distribution (with diagonal covariance), which naturally characterizes uncertainty
about the embedding. \cite{athiwaratkun2017multimodal} further extend this model by replacing the Gaussian distribution with a mixture of Gaussians, where the multimodal
mixture can capture multiple meanings of the same word. The motivation for these distributional embeddings is somewhat different from our motivation in this paper:
while the distribution in our model results from the inner structure of the instance being mapped (distribution of patterns within a sequence), the distribution 
in the model by~\cite{vilnis2014word} captures remaining uncertainty and is inferred during training. Another difference in the work by~\cite{vilnis2014word}
is that their model is trained in an unsupervised fashion, while we study supervised metric learning.
An approach similar to that of~\cite{vilnis2014word} has also been taken by~\cite{bojchevski2017deep} in order to map nodes of an attributed graph onto Gaussian distributions
that function as an embedding representation. This is again an unsupervised approach, and specific to the task of node embedding.

More generally, deep metric learning models have been recently used in different application domains featuring sequential data, including
natural language processing~\citep{mueller2016siamese, neculoiu2016learning}, computer vision~\citep{mclaughlin2016recurrent, wu2018and} 
and speaker identification~\citep{li2017deep,chung2018voxceleb2}, but these approaches are based on vector embeddings rather than distributional embeddings.%,okabe2018attentive}.

\section{Quantile Layers and Distributional Sequence Embeddings}
\label{sec:quantile_layers}

This section reviews \emph{quantile layers} as introduced by~\cite{abdelwahab2019Quantile} and discusses how they can be used to define distributional 
embeddings of variable-length sequences.

In this paper, we focus on variable-length sequences and deep convolutional neural network architectures that produce embeddings of such sequences. 
Typically, network architectures for such sequences would employ stacked convolution layers to extract informative features from the sequence, and in the last layer use some form of 
global pooling to transform the remaining variable-length representation into a fixed-length vector representation. 
Global pooling achieves this transformation by performing a simple aggregate operation such as taking the maximum or average over the filter activations across the sequence.
This has the potential disadvantage that most information about the distribution of the filter activations is lost, which might be informative for the task at hand.
In contrast, quantile layers aim to preserve as much information as possible about the distribution of filter activations along the sequence by approximating the quantile function
of this distribution. Earlier work has shown that this information can be informative for sequence classification, substantially increasing predictive 
accuracy~\citep{abdelwahab2019Quantile}.

In this paper, we use quantile layers for defining distributional embeddings of sequences. 
We assume that instances are given by variable-length sequences of the form 
$\mathbf{s} = (\bx_1,...,\bx_T)$ where $\bx_t \in \mathbb{R}^D$ 
is a vector of attributes that describes the sequence element at position $t$. We denote the space of all such sequences with $D$ attributes
by $\mathcal{S}_D = \bigcup_{T=1}^{\infty} \mathbb{R}^{T \times D}$. When a sequence is processed by a convolutional deep neural network architecture $\Gamma$, 
which we take to be the network without any final global aggregation layers, the result is a variable-length representation of the 
instance over $K$ filters. We denote this mapping by $\Gamma:\mathcal{S}_D \rightarrow \mathcal{S}_K$. 
Details of the deep convolutional architectures we employ are given in Section~\ref{sec:empirical_study}.
For $\mathbf{s} \in \mathcal{S}_D$ and $k \in \{1,...,K\}$ we will use $\Gamma_k(\mathbf{s})$ to denote the variable-length sequence of 
activations of filter $k$ produced by the network for sequence $\mathbf{s}$.

As in~\cite{abdelwahab2019Quantile} we use quantile functions in order to characterize the distribution of filter activations across the sequence $\Gamma_k(\mathbf{s})$.
Let $x \in \mathbb{R}$ be a real-valued random variable, let $p(x)$ denote its density and $F(x)$ its cumulative distribution function. The quantile function for $x$ is defined by
\begin{equation*}
Q(r) = \inf\{x \in \mathbb{R}: F(x) \geq r \}
\end{equation*}
where $\inf$ denotes the infimum. 
If $F$ is continuous and strictly monotonically increasing, $Q$ is simply the inverse of $F$.

Let $\mathcal{X} = \{x_1,...,x_{N}\}$ be a sample of the random variable $x$, that is, 
$x_n \sim p(x)$ for $n \in \{1,...,N\}$. 
The empirical quantile function \mbox{$\hat{Q}_{\mathcal{X}}: (0,1] \rightarrow \mathbb{R}$} is a non-parametric estimator of the quantile function $Q$. It is defined by 
\begin{equation}
\label{eq:empirical_quantile_function}
\hat{Q}_{\mathcal{X}}(r) = \inf\{x \in \mathbb{R}: r \leq \hat{F}_{\mathcal{X}}(x)\}
\end{equation}
where
$\hat{F}_{\mathcal{X}}(x) = \frac{1}{N}\sum_{i=1}^N I(x_i \leq x)$ is the empirical cumulative distribution function and $I(x_i \leq x) \in \{0,1\}$ is an indicator. 
$\hat{Q}_{\mathcal{X}}(r)$ is a piecewise constant function that is essentially obtained by sorting the samples in $\mathcal{X}$.
More formally, let $\pi$ be a permutation that sorts the $x_i$, that 
is, $x_{\pi(i)} \leq x_{\pi(i+1)}$ for $1 \leq i \leq N-1$. Then $\hat{Q}_{\mathcal{X}}(r) = x_{\pi(\ceil{r N})}$, where $\ceil{x}$ denotes the smallest integer larger or equal to $x$.  
The empirical quantile function $\hat{Q}_{\mathcal{X}}$ faithfully approximates the quantile function $Q$ in the sense that 
$|\hat{Q}_{\mathcal{X}}(r) - Q(r)|$ converges almost surely to zero if $N \rightarrow \infty$ and $Q$ is continuous at $r$~\citep{resnick2013extreme}.

To enable gradient flow in end-to-end learning, we will work with a piecewise linear interpolation of the piecewise constant function $\hat{Q}_{\mathcal{X}}(r)$. 
For $i \in \{1,...,N\}$ 
and $r \in [\frac{n-1}{N},\frac{n}{N}]$ we can define a linear approximation by 
\begin{equation*}
\tilde{Q}_{\mathcal{X}}(r) = N (x_{\pi(n+1)}-x_{\pi(n)})r + n x_{\pi(n)} + (1-n)x_{\pi(n+1)} \text{ \ \ } \left(r \in \left[\frac{n-1}{N},\frac{n}{N}\right]\right)
\end{equation*}
where we define $x_{\pi(N+1)} = x_{\pi(N)}$ to handle the right interval border.
Combining the linear approximations over the different $n$, we obtain for $r \in [0,1]$ the piecewise linear approximation
\begin{equation*}
\tilde{Q}_{\mathcal{X}}(r) = \sum_{n=1}^{N} \tilde{\delta} (r,n) \left( N (x_{\pi(n+1)}-x_{\pi(n)})r + n x_{\pi(n)} + (1-n)x_{\pi(n+1)}   \right)
\end{equation*}
where $\tilde{\delta}(r,n)$ is an indicator function that is defined as one if $r \in [\frac{n-1}{N},\frac{n}{N}]$ and zero otherwise.
The piecewise linear approximation $\tilde{Q}_{\mathcal{X}}(r)$ of the quantile function depends on the sample size $N$, because there are $N$ linear segments.
To arrive at an approximation of the quantile function that is independent of the number of samples, we define a further piecewise linear approximation of $\tilde{Q}_{\mathcal{X}}(r)$ 
using $M$ sampling points $\sigma(\alpha_1),...,\sigma(\alpha_{M})$, where $\sigma(\alpha) = (1+exp(-\alpha))^{-1}$ is the sigmoid function and $\alpha_i \in \mathbb{R}$ are
parameters with $\alpha_i \leq \alpha_{i+1}$. 
Formally, we define
\begin{equation}
\label{eq:qbar}
\bar{Q}_{\mathcal{X}}(r) = \sum_{i=0}^{M} \bar{\delta}(r,i) (a_{\mathcal{X},i} r+b_{\mathcal{X},i})
\end{equation}
where
\begin{align}
&a_{\mathcal{X},i} = \frac{\tilde{Q}_{\mathcal{X}}(\sigma(\alpha_{i+1})) - \tilde{Q}_{\mathcal{X}}(\sigma(\alpha_{i}))}{\sigma(\alpha_{i+1})-\sigma(\alpha_{i})} \label{eq:a_Xm}\\
&b_{\mathcal{X},i} = \tilde{Q}_{\mathcal{X}}(\sigma(\alpha_{i}))  - \sigma(\alpha_i) \frac{\tilde{Q}_{\mathcal{X}}(\sigma(\alpha_{i+1})) - \tilde{Q}_{\mathcal{X}}(\sigma(\alpha_{i}))}{\sigma(\alpha_{i+1})-\sigma(\alpha_{i})}, \label{eq:b_Xm}
\end{align}
 $\bar{\delta}(r,i)$ is an indicator function that is one if $r \in [\sigma(\alpha_i),\sigma(\alpha_{i+1})]$ and zero otherwise, 
and we have introduced $\alpha_{0}=-\infty$ and $\alpha_{M+1}=\infty$ to handle border cases. The function $\bar{Q}_{\mathcal{X}}(r)$ provides a piecewise linear
approximation of the quantile function using $M+1$ line segments, independently of the sample size $N$. The parameters $\alpha_i$ are learnable model parameters in the
deep neural network architectures that we study in Section~\ref{sec:empirical_study}.

We are now ready to define the distributional embedding for an instance, which is obtained by passing the instance through the neural network $\Gamma$ and
for each filter in the output of $\Gamma$ approximating the quantile function of the filter activations by the piecewise linear function $\bar{Q}$.
\begin{definition}[Distributional embedding of sequence] 
\label{def:embedding}
Let $\mathbf{s} \in \mathcal{S}_D$ and let $\Gamma$ denote a convolutional neural network structure. The distributional embedding of sequence $\mathbf{s}$ is given by the vector of piecewise linear functions
\begin{equation}
\label{eq:embedding}
\Psi_{\Gamma}(\mathbf{s}) = \left(\bar{Q}_{\Gamma_1(\mathbf{s})},...,\bar{Q}_{\Gamma_K(\mathbf{s})}\right)
\end{equation}
where $\bar{Q}_{\Gamma_k(\mathbf{s})}$ is defined by Equation~\ref{eq:qbar} using $\mathcal{X} = \Gamma_k(\mathbf{s})$. Here, we slightly generalize the notation by 
identifying the sequence of observations $\Gamma_k(\mathbf{s})$ with the corresponding set of observations.
\end{definition}

We note that due to the piecewise linear approximations, gradients can flow through the entire embedding architecture, both to parameters $\alpha_m$ and the 
weights in the deep neural network structure $\Gamma$. This includes the sorting operation, where gradients can be passed through by reordering the gradient backpropagated from the layer above according to the sorting indices $\pi$.

\section {A Wasserstein Loss for Distributional Embeddings}
\label{sec:distance_function} 

For training the embedding model, we will use deep metric learning approaches which train model
parameters such that instances of the same class are close and instances of different classes are far apart in the embedding space. In order to apply such approaches, 
a distance metric needs to be defined on the embedding space. 

\subsection{Distances Between Distributional Embeddings}
As discussed in Section~\ref{sec:quantile_layers}, in our setting embeddings of instances are given by distributions. 
Measuring the distance between two embeddings thus means comparing their respective distributions.
Different approaches to measure distances between probability distributions have been discussed in the literature. One of the most widely used distance functions
between distributions is the Kullback-Leibler divergence. However, this measure is asymmetric and can result in infinite distances, and is therefore not a metric.
A metric based on the Kullback-Leibler divergence is the square root of the Jensen-Shannon divergence, which is symmetric, bounded between zero and $\sqrt{log(2)}$, and satisfies the triangle inequality. 
However, this metric does not yield useful gradients in case the distributions being compared have disjoint support, which in our case would occur if two sequences with non-overlapping ranges of filter values are 
compared. To illustrate, let $q_1$ and $q_2$ denote densities with disjoint support $A_1$ and $A_2$, and let $m(x)=\frac{q_1(x)+q_2(x)}{2}$.
Then the Jensen-Shannon divergence $J$ of $q_1$ and $q_2$ is  
\begin{align*}
&J(q_1,q_2)= \frac{1}{2}\int_{A_1 \cup A_2}{q_1(x) log\left(\frac{q_1(x)}{m(x)}\right)}d x+\frac{1}{2}\int_{A_1 \cup A_2}{q_2(x) log\left(\frac{q_2(x)}{m(x)}\right)}d x\\
&\hspace{18mm}= \frac{1}{2}\int_{A_1}{q_1(x) log\left(2\frac{q_1(x)}{q_1(x)}\right)}d x+\frac{1}{2}\int_{A_2}{q_2(x) log\left(2\frac{q_2(x)}{q_2(x)}\right)}d x\\
&\hspace{18mm}=log(2)
\end{align*}
independently of the distance between $A_1$ and $A_2$, resulting in a gradient of zero. 

A different class of distance functions which are increasingly being studied in machine learning~\citep{frogner2015learning, gao2016distributionally,arjovsky2017wasserstein} 
are Wasserstein distances. 
Wasserstein distances are based on the idea of optimal transport plans. 
They do not suffer from the zero-gradient problem exhibited by the Jensen-Shannon divergence, 
because they take into account the metric of the underlying space. They also guarantee continuity under mild assumptions, which is not the case for the Jensen-Shannon divergence as illustrated by \cite{arjovsky2017wasserstein}.
In the general case, the $p$-Wasserstein distance (for $p \in \mathbb{N}$) between two probability measures $\rho_1$ and $\rho_2$ over a space $\mathcal{M}$ with metric $d$ can be defined as
\begin{equation}
W_p(\rho_1,\rho_2) = \left( \inf_{\pi\in\mathcal{J}(\rho_1,\rho_2)} \int_{\mathcal{M}\times\mathcal{M}}{d(x,y)^p d\pi(x,y)}\right)^{\frac{1}{p}}
\end{equation}
where $\mathcal{J}(\rho_1,\rho_2)$ denotes the set of all joint measures on $\mathcal{M} \times\mathcal{M}$ with marginals $\rho_1$ and $\rho_2$. 
For the purpose of this paper, we are interested in the case of real-valued random variables.
If $q_1(x_1)$ and $q_2(x_2)$ are two densities defining distributions over real-valued random variables, $x_i \in \mathbb{R}$,
the $p$-Wasserstein distance between $q_1$ and $q_2$ is given by
\begin{equation}
\label{eq:wasserstein_1d}
W_p(q_1,q_2) = \left( \inf_{q \in \mathcal{J}(q_1,q_2)} \iint{|x_1-x_2|^p q(x_1,x_2)d x_1 d x_2} \right)^{\frac{1}{p}}
\end{equation}
where $\mathcal{J}(q_1,q_2)$ defines the set of all joint distributions over $x_1$, $x_2$ which have marginals $q_1$ and $q_2$. A joint distribution $q \in \mathcal{J}(q_1,q_2)$
can be seen as a \emph{transport plan}, that is, a way of moving probability mass from density $q_1$ such that the resulting density is $q_2$, in the sense that
$q(x_1,x_2)$ indicates how much mass is moved from $q_1(x_1)$ to $q_2(x_2)$. The quantity $\iint{|x_1-x_2|^p q(x_1,x_2)d x_1 d x_2}$ is the cost of the transport plan,
which depends on the amount of probability mass moved, $q(x_1,x_2)$, and the distance by which the mass has been moved, $|x_1-x_2|^p$.
The infimum over the set $\mathcal{J}(q_1,q_2)$ means that the distance between the distributions is given by the optimal transport plan, which intuitively characterizes
the minimum changes that need to be made to $q_1$ in order to transform it into $q_2$. For $p=1$ the distance is therefore also called the \emph{Earth Mover Distance}.
The advantage of this measure is that it takes into account the metric in the underlying space, as can be seen from Figure~\ref{fig:dist}.  
Here, $q_1$ is closer to $q_2$ than it is to $q_3$ in the sense that the probability mass needs to be moved less far. Thus, $W_p(q_1,q_2) < W_p(q_1,q_3)$, while the Jensen-Shannon
distances between the two pairs of distributions would be identical.

\begin{figure}[t]
\centering
\includegraphics[width=0.8\linewidth]{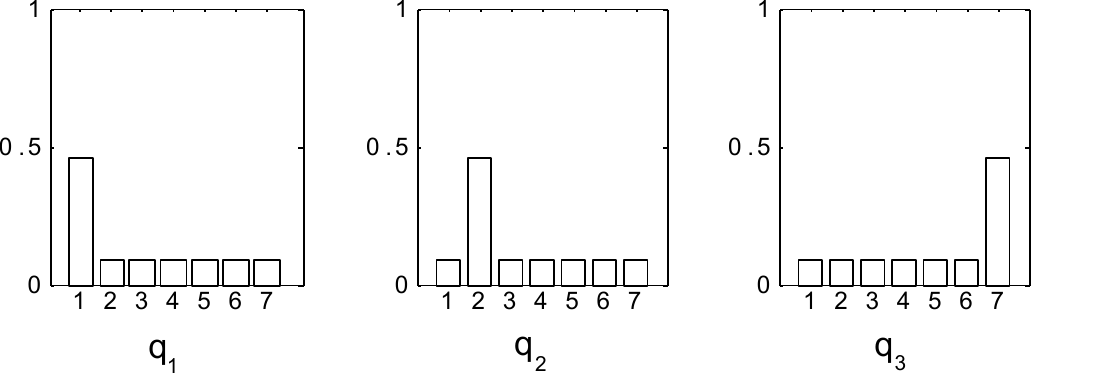}
\caption{
According to the Wasserstein metric, distributions $q_1$ and $q_2$ are closer than $q_1$ and $q_3$, while distances would be identical under the Jensen-Shannon measure.
}
\label{fig:dist}
\end{figure}

\sloppy

Because Wasserstein distances are defined in terms of optimal transport plans, computing them in general requires solving non-trivial optimization problems. 
However, for the case of real-valued random variables $x_i \in \mathbb{R}$, there is a simple closed-form solution to the infimum in Equation~\ref{eq:wasserstein_1d}.
Let $x_1 \sim q_1$, $x_2 \sim q_2$ with $x_i \in \mathbb{R}$.
According to \cite{cambanis1976inequalities}, the function $K(x_1,x_2)=|x_1-x_2|^p$ for $p \geq 1$ is quasi-antitone and therefore the infimum of the expectation of this function over the set of all joint distributions, $\inf_{q \in \mathcal{J}(q_1,q_2)} E[K(x_1,x_2)]$, is given by $\int_0^1 K(Q_1(r), Q_2(r)) d r$, where
$Q_i(r)=\inf\{t:q_i(x_i \leq t) \geq r\}$ is the quantile function to the density $q_i$.
We can thus rewrite Equation~\ref{eq:wasserstein_1d} as
\begin{equation}
W_p(q_1,q_2) = \left(\int_0^1{|Q_1(r)-Q_2(r)|^p} d r \right)^{\frac{1}{p}}.
\end{equation}
\fussy 

We now define the distance between two embeddings $\Psi_{\Gamma}(\mathbf{s})$ and $\Psi_{\Gamma}(\mathbf{s}')$ as the Wasserstein distance between the 
approximate representation of the quantile functions in the embedding \textit{}as defined by Definition~\ref{def:embedding},
summed over the different filters $k$.
\begin{definition}
\label{def:distance}
Let $\mathbf{s}, \mathbf{s}' \in \mathcal{S}_D$, let $\Gamma$ denote a convolutional neural network architecture, and let $\Psi_{\Gamma}(\mathbf{s})$ and $\Psi_{\Gamma}(\mathbf{s}')$ denote 
the distributional embeddings of $\mathbf{s}$, $\mathbf{s}'$ as defined by Definition~\ref{def:embedding}.
Then we define the distance between the embeddings as
\begin{equation}
\label{eq:distance}
d_p(\Psi_{\Gamma}(\mathbf{s}),\Psi_{\Gamma}(\mathbf{s}')) = \sum_{k=1}^K \left(\int_0^1{| \bar{Q}_{\Gamma_k(\mathbf{s})}(r) - \bar{Q}_{\Gamma_k(\mathbf{s}')}(r)|^p} d r \right)^{\frac{1}{p}}
\end{equation}
\end{definition}
The next proposition gives a closed-form result for computing $d_p(\Psi_{\Gamma}(\mathbf{s}),\Psi_{\Gamma}(\mathbf{s}'))$.
\vspace*{-0.5cm}
\begin{proposition}
\label{prop:closed_form}
Let $\mathbf{s}, \mathbf{s}' \in \mathcal{S}_D$, let $\Gamma$ denote a convolutional neural network architecture, let $\Psi_{\Gamma}(\mathbf{s})$ and $\Psi_{\Gamma}(\mathbf{s}')$ denote 
the distributional embeddings of $\mathbf{s}$, $\mathbf{s}'$, and let $d_p(\Psi_{\Gamma}(\mathbf{s}),\Psi_{\Gamma}(\mathbf{s}'))$ denote their distance as defined by Definition~\ref{def:distance}. 
Then
\begin{align}
&d_p(\Psi_{\Gamma}(\mathbf{s}),\Psi_{\Gamma}(\mathbf{s}')) =%\notag\\&\hspace{5mm} 
\sum_{k=1}^K \biggl( \sum_{i=0}^M  \frac{(\bar{a}_{i,k}\sigma(\alpha_{i+1})+\bar{b}_{i,k})|\bar{b}_{i,k}\sigma(\alpha_{i+1})+\bar{b}_{i,k}|^p}{\bar{a}_{i,k}(p+1)}\notag\\
&\hspace{45mm}-\frac{(\bar{a}_{i,k}\sigma(\alpha_i)+\bar{b}_{i,k})|\bar{a}_{i,k} \sigma(\alpha_i)+\bar{b}_{i,k}|^p}{\bar{a}_{i,k}(p+1)} \biggr)^{\frac{1}{p}}\label{eq:distance_closed_form}
\end{align}
with 
\begin{align*}
&\bar{a}_{i,k} = a_{\Gamma_k(\mathbf{s}),i} - a_{\Gamma_k(\mathbf{s}'),i}\\ 
&\bar{b}_{i,k} = b_{\Gamma_k(\mathbf{s}),i} - b_{\Gamma_k(\mathbf{s}'),i} 
\end{align*} 
where $a_{\mathcal{X},i}$ and $b_{\mathcal{X},i}$ for $\mathcal{X} \in \{\Gamma_k(\mathbf{s}),\Gamma_k(\mathbf{s}')\}$ are defined by Equations~\ref{eq:a_Xm} and~\ref{eq:b_Xm}, 
$\sigma$ is the sigmoid function, and as above we have introduced $\alpha_{0}=-\infty$ and $\alpha_{M+1}=\infty$ to handle border cases.
\end{proposition}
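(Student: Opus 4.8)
The plan is to exploit the fact that the two approximate quantile functions $\bar{Q}_{\Gamma_k(\mathbf{s})}$ and $\bar{Q}_{\Gamma_k(\mathbf{s}')}$ are evaluated using the \emph{same} set of breakpoints $\sigma(\alpha_0),\dots,\sigma(\alpha_{M+1})$, since the $\alpha_i$ are shared model parameters of the architecture $\Gamma$ and do not depend on the input sequence. First I would observe that, on each interval $[\sigma(\alpha_i),\sigma(\alpha_{i+1})]$, both functions are linear in $r$ by Equation~\ref{eq:qbar}, so their difference is again linear with slope $\bar{a}_{i,k}$ and intercept $\bar{b}_{i,k}$, i.e.\ $\bar{Q}_{\Gamma_k(\mathbf{s})}(r)-\bar{Q}_{\Gamma_k(\mathbf{s}')}(r)=\bar{a}_{i,k}r+\bar{b}_{i,k}$ on that interval. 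Because $\alpha_0=-\infty$ and $\alpha_{M+1}=\infty$ give $\sigma(\alpha_0)=0$ and $\sigma(\alpha_{M+1})=1$, the intervals $[\sigma(\alpha_i),\sigma(\alpha_{i+1})]$ for $i=0,\dots,M$ tile $[0,1]$, so I can split the integral in Equation~\ref{eq:distance} as
\begin{equation*}
\int_0^1 |\bar{Q}_{\Gamma_k(\mathbf{s})}(r)-\bar{Q}_{\Gamma_k(\mathbf{s}')}(r)|^p\,dr = \sum_{i=0}^M \int_{\sigma(\alpha_i)}^{\sigma(\alpha_{i+1})} |\bar{a}_{i,k}r+\bar{b}_{i,k}|^p\,dr.
\end{equation*}

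Next I would evaluate each segment integral in closed form. Substituting $u=\bar{a}_{i,k}r+\bar{b}_{i,k}$ (so $dr=du/\bar{a}_{i,k}$) and using the antiderivative $\int |u|^p\,du = \frac{u|u|^p}{p+1}$, which one checks holds for all real $u$ and $p\geq 1$ by differentiating the two sign cases separately, gives
\begin{equation*}
\int_{\sigma(\alpha_i)}^{\sigma(\alpha_{i+1})} |\bar{a}_{i,k}r+\bar{b}_{i,k}|^p\,dr = \frac{1}{\bar{a}_{i,k}(p+1)}\Bigl[\, u|u|^p\,\Bigr]_{u=\bar{a}_{i,k}\sigma(\alpha_i)+\bar{b}_{i,k}}^{u=\bar{a}_{i,k}\sigma(\alpha_{i+1})+\bar{b}_{i,k}}.
\end{equation*}
Expanding the evaluation at the two limits reproduces the two terms appearing inside the sum over $i$ in Equation~\ref{eq:distance_closed_form}. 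Summing over the segments $i$, taking the $p$-th root, and summing over filters $k$ then yields the claimed formula.

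The step that needs the most care is the handling of the absolute value, and the main obstacle is verifying that the antiderivative $\frac{u|u|^p}{p+1}$ remains valid even when the linear function $\bar{a}_{i,k}r+\bar{b}_{i,k}$ changes sign inside an interval $[\sigma(\alpha_i),\sigma(\alpha_{i+1})]$; I would confirm this by checking that $\frac{d}{du}\frac{u|u|^p}{p+1}=|u|^p$ on both $u>0$ and $u<0$ and that the antiderivative is continuous at $u=0$, so that the fundamental theorem of calculus applies across the sign change without splitting the integration domain. A second point worth addressing is the degenerate case $\bar{a}_{i,k}=0$ (equal slopes), which makes the closed form undefined through division by $\bar{a}_{i,k}$: there the integrand is the constant $|\bar{b}_{i,k}|^p$ and the segment integral is simply $|\bar{b}_{i,k}|^p\,(\sigma(\alpha_{i+1})-\sigma(\alpha_i))$, which is recovered from Equation~\ref{eq:distance_closed_form} as the limit $\bar{a}_{i,k}\to 0$.
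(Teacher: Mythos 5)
Your proof is correct and follows essentially the same route as the paper's: split the integral at the shared breakpoints $\sigma(\alpha_i),\dots,\sigma(\alpha_{i+1})$, where the difference of the two piecewise-linear quantile approximations is the linear function $\bar{a}_{i,k}r+\bar{b}_{i,k}$, and evaluate each segment integral using the antiderivative $\frac{u|u|^p}{p+1}$ (the paper verifies this same antiderivative directly in the variable $r$ via product and chain rules rather than through your substitution $u=\bar{a}_{i,k}r+\bar{b}_{i,k}$). Your two extra checks---that the antiderivative remains valid across a sign change of the integrand, and that the degenerate case $\bar{a}_{i,k}=0$ must be interpreted as a limit since the closed form divides by $\bar{a}_{i,k}$---are points the paper's proof silently glosses over; they refine rather than change the argument.
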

\begin{proof}[Proposition~\ref{prop:closed_form}]
Starting from Definition~\ref{def:distance} and plugging in $\bar{Q}_{\Gamma_k(\mathbf{s})}$ as defined by Equation~\ref{eq:qbar}, we see that
\begin{align}
& \int_0^1{| \bar{Q}_{\Gamma_k(\mathbf{s})}(r) - \bar{Q}_{\Gamma_k(\mathbf{s}')}(r)|^p} d r \notag \\
& \hspace{20mm}= \int_0^1{| \sum_{i=0}^{M} \bar{\delta}(r,i) \left((a_{\Gamma_k(\mathbf{s}),i}-a_{\Gamma_k(\mathbf{s}'),i}) r + b_{\Gamma_k(\mathbf{s}),i} - b_{\Gamma_k(\mathbf{s}'),i} \right) |^p} d r \notag \\
& \hspace{20mm}= \sum_{i=0}^{M} \int_{\sigma(\alpha_i)}^{\sigma(\alpha_{i+1})}{ | \bar{a}_{i,k} r + \bar{b}_{i,k} |^p} d r \label{eq:limits}\\
& \hspace{20mm}= \sum_{i=0}^{M} \frac{(\bar{a}_{i,k} r + \bar{b}_{i,k})|\bar{a}_{i,k} r +\bar{b}_{i,k}|^p}{\bar{a}_{i,k} (p+1)} \Bigg|_{\sigma(\alpha_i)}^{\sigma(\alpha_{i+1})} \label{eq:antiderivative}
\end{align} 
\end{proof}
where in Equation~\ref{eq:antiderivative} we use the notation $G(r)\vert_a^b = G(b)-G(a)$.
In Equation~\ref{eq:limits} we integrate over subintervals $[\sigma(\alpha_i),\sigma(\alpha_{i+1})]$ of the interval $[0,1]$, and can therefore remove the indicator function $\bar{\delta}(r,i)$. 
In Equation~\ref{eq:antiderivative} we solve the integral, where we exploit that according to product and chain rules 
\begin{align*}
&\frac{\partial}{\partial r}\frac{(\bar{a}_{i,k} r + \bar{b}_{i,k})|\bar{a}_{i,k} r +\bar{b}_{i,k}|^p}{\bar{a}_{i,k} (p+1)}  \\
&\hspace{5mm} =\frac{\bar{a}_{i,k}|\bar{a}_{i,k} r + \bar{b}_{i,k} |^p +(\bar{a}_{i,k} r + \bar{b}_{i,k}) p |\bar{a}_{i,k} r + \bar{b}_{i,k} |^{p-1} \sign(\bar{a}_{i,k} r + \bar{b}_{i,k}) \bar{a}_{i,k}}{\bar{a}_{i,k}(p+1)}\\
&\hspace{5mm}=|\bar{a}_{i,k} r + \bar{b}_{i,k} |^p.
\end{align*}
The claim directly follows from Equation~\ref{eq:antiderivative}. \qed

An important note with respect to the distance function $d_p(\Psi_{\Gamma}(\mathbf{s}),\Psi_{\Gamma}(\mathbf{s}'))$ is that its
 closed-form computation given by Proposition~\ref{prop:closed_form} allows gradients to be propagated through distance computations (as well as through embedding computations as discussed in Section~\ref{sec:quantile_layers}) to the parameters of the model $\Gamma$ defining the embedding. Moreover, all computations can be expressed using standard building blocks available in common deep learning frameworks, such that all gradients are available through automatic differentiation.

\subsection{Loss Function}

Deep metric learning trains models with loss functions that drive the model towards minimizing distances between pairs of instances from the 
same class (positive pairs) while maximizing distances between pairs of instances from different classes (negative pairs). 
Existing approaches differ in the way negative and positive pairs are selected and the exact formulation of the loss. 
For example, triplet-based losses as introduced by~\cite{schroff2015facenet} compare the distance between an anchor instance 
and another instance from the same class (positive pair) to the distance between the anchor instance and an instance 
from a different class (negative pair).
However, comparing a positive pair with only a single negative pair does not take into account the distance to other classes and can thereby 
lead to suboptimal gradients; more recent approaches therefore often consider several negative pairs for 
each positive pair~\citep{oh2016deep,sohn2016improved}. Inspired by these approaches, we consider several negative pairs for each positive pair,
leading to a loss function of the form
\begin{equation*}
\mathcal{L} = \sum_{(\mathbf{s_1},\mathbf{s_2}) \in \mathcal{P}}\sum_{\substack{(\mathbf{s}_3,\mathbf{s}_4) \in \mathcal{N} \\ \mathbf{s}_3\in\{\mathbf{s}_1,\mathbf{s}_2\}}} \ell(\mathbf{s}_1,\mathbf{s}_2,\mathbf{s}_3,\mathbf{s}_4)
\end{equation*}
where $\mathcal{P} \subset \mathcal{S}_D \times \mathcal{S}_D$ is a set of positive pairs and $\mathcal{N} \subset \mathcal{S}_D \times \mathcal{S}_D$ is a set of negative
pairs of instances, and $\ell(\mathbf{s}_1,\mathbf{s}_2,\mathbf{s}_3,\mathbf{s}_4)$ is a loss function that penalizes cases in which a negative pair $(\mathbf{s}_3,\mathbf{s}_4)$ has smaller distance than a positive pair $(\mathbf{s}_1,\mathbf{s}_2)$. A straightforward linear formulation of the loss would be 
$\ell(\mathbf{s}_1,\mathbf{s}_2,\mathbf{s}_3,\mathbf{s}_4) = d_p(\Psi_{\Gamma}(\mathbf{s}_1),\Psi_{\Gamma}(\mathbf{s}_2))-d_p(\Psi_{\Gamma}(\mathbf{s}_3),\Psi_{\Gamma}(\mathbf{s}_4))$. However, only pairs of pairs that violate the distance criterion should contribute 
to the loss, leading to $\ell(\mathbf{s}_1,\mathbf{s}_2,\mathbf{s}_3,\mathbf{s}_4) = \max(0,d_p(\Psi_{\Gamma}(\mathbf{s}_1),\Psi_{\Gamma}(\mathbf{s}_2))-d_p(\Psi_{\Gamma}(\mathbf{s}_3),\Psi_{\Gamma}(\mathbf{s}_4)))$. 
We further replace this loss by a smooth upper bound using log-sum-exp, leading to our final Wasserstein-based loss function
\begin{equation}
\label{eq:final_loss}
\mathcal{L} = \sum_{(\mathbf{s_1},\mathbf{s_2}) \in \mathcal{P}}\sum_{\substack{(\mathbf{s}_3,\mathbf{s}_4) \in \mathcal{N} \\ \mathbf{s}_3\in\{\mathbf{s}_1,\mathbf{s}_2\}}} 
log\left( 1+ exp^{d_p(\Psi_{\Gamma}(\mathbf{s}_1),\Psi_{\Gamma}(\mathbf{s}_2))-d_p(\Psi_{\Gamma}(\mathbf{s}_3),\Psi_{\Gamma}(\mathbf{s}_4))}\right).
\end{equation}
Equation~\ref{eq:final_loss} is of similar structure as other losses used in the literature, including the angular triplet loss~\citep{wang2017deep}, the lifted structured loss~\citep{oh2016deep}, and the N-pair loss~\citep{sohn2016improved}. 

It remains to specify how positive pairs $\mathcal{P}$ and negative pairs $\mathcal{N}$ are sampled for each stochastic gradient descent step. 
We use the approach of~\cite{sohn2016improved} for generating $\mathcal{P}$ and $\mathcal{N}$, which has been shown to give state-of-the-art performance
in several studies~\citep{sohn2016improved,wu2017sampling,yuan2017hard}, in particular outperforming triplet-based sampling~\citep{schroff2015facenet} 
and lifted structure sampling~\citep{oh2016deep}.
The approach constructs a batch of size $2N$ (where $N$ is an adjustable parameter) by sampling from the training data 
$N$ pairs of instances $\mathcal{P}=\{(\mathbf{s}_i,\mathbf{s}_i^+)\}_{i=1}^{N}$ from $N$ different classes, such that each pair $(\mathbf{s}_i,\mathbf{s}_i^+)$ 
is a positive pair from a different class. 
From the sampled batch, a set of $N(N-1)$ negative pairs is constructed by setting $\mathcal{N}=\{(\mathbf{s}_i,\mathbf{s}_j^+)\}_{\substack{i,j=1 \\ j \neq i}}^{N}$.
Note that Equation~\ref{eq:final_loss} can be computed by first computing the embeddings of the $2N$ instances in the batch, and then computing the overall loss. Thus, although the computation is quadratic in $N$, the number of evaluations of the deep neural network model $\Gamma$ is linear in the batch size.

\section{Empirical Study}
\label{sec:empirical_study} 

We empirically study the proposed method in three biometric identification domains involving human eye movements, 
accelerometer-based observation of human gait, and EEG recordings.
As an ablation study, we specifically evaluate which impact the different components of our proposed method -- the metric learning approach, the use of quantile layers to fit the distribution of activations of filters across a sequence, and the Wasserstein-based distance function -- have on overall performance.

\subsection{Data Sets}

We study biometric identification based eye movements, the gait, or the EEG signal of a subject.
In all domains, the data consist of sequential observations of the corresponding low-level sensor signal -- gaze position from an eye tracker, accelerometer measurements,
or EEG measurements -- for different subjects. The task is to identify the subject based on the observed sensor measurements.

The \emph{Dynamic Images and Eye Movements} (DIEM) dataset~\citep{mital2011clustering} contains eye movement data of 210 subjects each viewing a subset of 84 video clips.
The video clips are of varying length with an average of 95 seconds and contain different visual content, such as excerpts from sport matches, documentary videos, movie trailers, or 
recordings of street scenes. The data contain the gaze position on the screen for the left and the right eye, as well as a measurement of the pupil dilation, at a temporal
resolution of 30 Hz. The eye movement data of a particular individual on a particular video clip is thus given by a sequence of six-dimensional vectors (horizontal and vertical 
gaze coordinate for left and right eye plus left and right pupil dilation), that is, $D=6$ in the notation of Section~\ref{sec:quantile_layers}. The average sequence 
length is 2850 and there are 5381 sequences overall.

The gait data we use come from a study by \cite{ihlen2015discriminant} who collected the daily movement activity of 71 subjects for a period of 3 consecutive days. 
The recorded data consists of time series of 3D accelerometer measurements recorded at a sampling rate of 100Hz. For each point in time, the measurement is a $D=6$ dimensional vector consisting of the acceleration and velocity in $x$, $y$, and $z$ direction. 
In the original data set, a continuous measurement for 3 days has been carried out for each individual. These long measurements contain different activities, but also 
long idle periods (for example, during sleep). We concentrate on subsequences showing high activity, by dividing the entire recording for each subject into 
intervals of length one minute, and then selecting for each subject the 30 subsequences that had the largest standard deviation in the 6-dimensional observations. 
This resulted in 2130 sequences overall (30 for each of the 71 subjects), with a length of $T=6000$ per sequence.

The EEG data we use come from a study by~\cite{zhang1995event} who conducted EEG recording sessions with 121 subjects, measuring the signal from 64 electrodes placed on the scalp
at a temporal resolution of 256Hz of the subjects while viewing an image stimulus. The original aim of the study was to find a correlation between EEG observations and genetic predisposition to alcoholism,
but as subject identifiers are available for all recordings the data can also be used in a biometric setting. 
Each subject completed between 40 and 120 trials with 1 second of recorded data per trial. The resulting data
therefore consist of sequences of $D=64$ dimensional vectors with a sequence length of $256$ (one trial for one subject).

\subsection{Problem Setting}
As usual in metric learning, we study a setting in which there are distinct sets of subjects at training and test time. The embedding model is first trained 
on a set of training subjects. On a separate and disjoint set of test subjects, we then evaluate to what degree the learned embedding assigns small distances to 
pairs of test sequences from the same subject, and large distances to pairs of sequences from different subjects. This reflects an application setting
in which new subjects are registered in a database without retraining the embedding model. It also naturally allows the identification of imposters,
that is, subjects who have never been observed (neither during training nor in the database of registered subjects) and try to gain access to the system.

In all three domains, we therefore first split the data into training and test data, such that there is no overlap in subjects between the two. 
For training the embedding model,
we use data of 105 of the 210 subjects (eye movements), 36 of 71 subjects (gait data), or 61 of 121 subjects (EEG data). 
For the eye movement domain, we additionally ensure that there is no overlap in visual stimulus (video clips) between training and test data
by splitting the set of all videos into training and test videos and only keeping the respective sequences in the training and test data. 
During training, each sequence constitutes an instance and the subject its class, and we train either embedding models using metric learning as 
discussed in Section~\ref{sec:distance_function} or, as a baseline, multiclass classification models (see Section~\ref{sec:methods} for details).  
We also set apart the data of 20\% of the training individuals as validation data to tune model hyperparameters. 

At test time, we simulate a biometric application setting by first sampling, for each test subject, a random subset of the sequences available for that subject as instances
that are put in an enrollment database. We then simulate that we observe additional sequences from a subject which are compared to the sequences of all subjects in the enrollment 
database. An embedding is good if the distance between these additional sequences and the enrollment sequences of the same subject is low, compared to the distance to 
the enrollment sequences of other subjects. More precisely, for each subject we use all except five of the sequences available for that subject 
as enrollment sequences. We then study how well the subject can be identified based on observing $n$ of the remaining sequences, for $n\in \{1,..,5\}$.
Given observed sequences $\mathbf{s}_1,...,\mathbf{s}_n$ (representing a subject that is unknown at test time), we compute distances to all subjects $j$ 
as $d_j = \frac{1}{n}\sum_{i=1}^n d(\mathbf{s}_i,\mathbf{s}_{ij})$ where $\mathbf{s}_{ij}$ is the sequence of subject $j$ in the enrollment database with minimal distance to $\mathbf{s}_i$. Here, the definition of the distance function $d$ is method-specific (see below for details).

We first study a \emph{verification} scenario. This is the binary problem of deciding if the observed sequences 
$\mathbf{s}_1,...,\mathbf{s}_n$ match a particular subject $j$, by comparing the computed distance $d_j$ to a threshold value. Varying the threshold trades of 
false-positive versus false-negative classifications, yielding a ROC curve and AUC score. Note that the verification scenario also covers the setting in which 
in imposter is trying to get access to a system as a particular user; the false-positive rate is the rate at which such imposters would be accepted.

We then study a \emph{multiclass identification} scenario, where we use the model to assign the observed sequences $\mathbf{s}_1,...,\mathbf{s}_n$ to a subject enrolled 
in the database (the subject $j^* = \arg\min_j d_j$). This constitutes a multiclass classification problem for which (multiclass) accuracy is measured. 
In this experiment, we also vary the number of subjects under study, by randomly sampling a subset of subjects which are enrolled in the database; the same
subset of subjects is observed at test time. The identification problem becomes more difficult as the number of subjects increases.

We finally study the robustness of the model to imposters in the multiclass identification scenario, an experiment we denote as \emph{multiclass imposters}. 
This reflects applications in which access to a system does not require a user name, because the system tries to automatically identify who is trying to gain access.
In this experiment, half of the test subjects play the role of imposters who are not registered in the enrollment
database. 
As in the multiclass identification setting, observed sequences are matched to the enrolled subject with minimum distance.
This minimum distance is then compared to a threshold value; if the threshold is exceeded, the match is rejected and the observed sequences are classified as
belonging to an imposter. Varying the threshold trades off false-positives (match of imposter accepted) versus false-negatives (match of a subject enrolled in the database
rejected), yielding a ROC curve and AUC. Correctly rejecting imposters is harder in this setting because it suffices for an imposter to successfully impersonate any enrolled subject.
In this experiment we also vary the number of subjects enrolled in the database.

In all three scenarios, the split of sequences into enrollment and observed sequences is repeated 10 times to obtain standard deviations of results. 
Moreover, accuracies and AUCs will increase with increasing $n$, as identification becomes easier the more data of an unknown subject is available.

\subsection{Methods Under Study}
\label{sec:methods}
We generally study the deep convolutional architecture proposed by~\cite{abdelwahab2019Quantile} for biometric identification, which consists of 16 stacked 1D-convolution layers
with PReLU activation functions.
We vary the aggregation operation, loss function, and training algorithm in order to evaluate the impact of these components on overall performance.

\noindent\textbf{\ourmethod}: Our method, combining the quantile embeddings of Section~\ref{sec:quantile_layers} with the Wasserstein-based loss function and metric learning algorithm
of Section~\ref{sec:distance_function}. In all experiments, we set the parameter $p$ of the distance function (see Definition~\ref{def:distance}) to one, that is, we use
the Earth Mover Distance variant of the Wasserstein distance. The convolutional neural network architecture $\Gamma$ of Section~\ref{sec:quantile_layers} is given by 
16 stacked convolution layers with parametric RELU activations as defined by~\cite{abdelwahab2019Quantile}. The number of sampling points for the quantile function is $M=16$.
At test time, distance between instances is given by the distance function from Definition~\ref{def:distance}.

\noindent\textbf{\quantilenpl}: This method uses the same network architecture and quantile embedding as \ourmethod. However, the resulting quantile embedding is then flattened
into an $K\cdot M$ vector embedding, with entries $\bar{Q}_{\Gamma_k(\mathbf{s})}(\alpha_m)$ for $k \in \{1,...,K\}$ and $m\in\{1,...,M\}$. Then standard $N$-pair loss, which is based on cosine similarities between embedding vectors~\citep{sohn2016improved}, is used for training.
At test time, the distance between instances is given by negative cosine similarity.
This method utilizes quantile-based aggregation and metric learning, but does not employ our Wasserstein-based loss function.  

\noindent\textbf{\maxnpl}: This method uses the same basic network architecture as \quantilenpl, but uses standard max-pooling instead of a quantile layer for global aggregation. 
This results in a $K$-dimensional embedding vector. As for \quantilenpl, the model is trained using metric learning with the $N$-pair loss. At test time,
distance is given by negative cosine similarity. 
This baseline uses metric learning, but neither quantile layers nor the Wasserstein-based loss function.

\noindent\textbf{\quantileclass}: This baseline uses the same network architecture and flattened quantile
embedding as \quantilenpl, but feeds the flattened embedding vector into a dense classification layer with softmax activation.
The models is trained in a classification setting using multiclass crossentropy. 
Distance at test time is given by negative cosine similarity.
This model is identical to the model presented in~\cite{abdelwahab2019Quantile}, except that we remove the final classification layer at test time to generate embeddings
for novel subjects.

For all methods, training is carried out using the Adam optimizer with learning rate 0.0001 for 50000 iterations, and the regularizer of the PReLU activation function is tuned
 as a hyperparameter on the validation set as in~\citep{abdelwahab2019Quantile}.

\begin{table}[t!]
\begin{center}
\setlength\tabcolsep{2pt}
\resizebox{\textwidth}{!}{
\begin{tabular}{llllll}%{p{1.5cm}p{1.7cm}p{1.7cm}p{1.7cm}p{1.7cm}p{1.7cm}}
\hline\noalign{\smallskip}
Eye data&	1 Video&	2 Videos&	3 Videos&	4 Videos&	5 Videos\\
\noalign{\smallskip}\hline\noalign{\smallskip}
\ourmethod &\textbf{0.9466}$\pm$0.0032&\textbf{0.9716}$\pm$0.0020&\textbf{0.9799}$\pm$0.0013&\textbf{0.9837}$\pm$0.0008&\textbf{0.9860}$\pm$0.0005\\
\quantilenpl&0.9345$\pm$0.0033&0.9584$\pm$0.0027&0.9667$\pm$0.0020&0.9705$\pm$0.0014&0.9738$\pm$0.0010\\
\maxnpl&0.8890$\pm$0.0035&0.9232$\pm$0.0028&0.9334$\pm$0.0017&0.9392$\pm$0.0014&0.9437$\pm$0.0016\\
\quantileclass&0.9007$\pm$0.0053&0.9318$\pm$0.0029&0.9424$\pm$0.0025&0.9503$\pm$0.0025&0.9538$\pm$0.0026\\
\hline\noalign{\smallskip}
Gait data&1 Minute&2 Minutes&3 Minutes&4 Minutes&5 Minutes\\
\noalign{\smallskip}\hline\noalign{\smallskip}
\ourmethod&\textbf{0.9923}$\pm$0.0008&\textbf{0.9963}$\pm$0.0003&\textbf{0.9971}$\pm$0.0003&\textbf{0.9974}$\pm$0.0002&\textbf{0.9978}$\pm$0.0001\\
\quantilenpl&0.9889$\pm$0.0009&0.9932$\pm$0.0004&0.9943$\pm$0.0003&0.9947$\pm$0.0002&0.9951$\pm$0.0002\\
\maxnpl&	0.9459$\pm$0.0027&0.9624$\pm$0.0027&0.9690$\pm$0.0021&0.9735$\pm$0.0016&0.9757$\pm$0.0012\\
\quantileclass&0.9579$\pm$0.0040&0.9756$\pm$0.0018&0.9812$\pm$0.0016&0.9856$\pm$0.0011&0.9878$\pm$0.0008\\
\hline\noalign{\smallskip}
EEG data&1 Second&2 Seconds&3 Seconds&4 Seconds&5 Seconds\\
\noalign{\smallskip}\hline\noalign{\smallskip}
\ourmethod&\textbf{0.9968}$\pm$0.0006&\textbf{0.9985}$\pm$0.0001&\textbf{0.9988}$\pm$0.0001&\textbf{0.9991}$\pm$0.0000&\textbf{0.9992}$\pm$0.0000\\
\quantilenpl&0.9927$\pm$0.0005&0.9941$\pm$0.0005&0.9953$\pm$0.0003&0.9955$\pm$0.0002&0.9959$\pm$0.0001\\
\maxnpl&0.9611$\pm$0.0012&0.9687$\pm$0.0005&0.9713$\pm$0.0005&0.9722$\pm$0.0005&0.9732$\pm$0.0005\\
\quantileclass&0.9796$\pm$0.0017&0.9868$\pm$0.0009&0.9901$\pm$0.0010&0.9920$\pm$0.0006&0.9923$\pm$0.0007\\
\noalign{\smallskip}\hline
\end{tabular}
}
\end{center}
\caption{
Area under the ROC curve with standard error for all methods and domains in the verification setting for varying number $n \in \{1,2,3,4,5\}$ of observed sequences.
}
\label{tab:res}
\end{table}

\subsection{Results}

We present and discuss empirical results for the different domains in turn.

\subsubsection{Eye Movements}
Table~\ref{tab:res}, upper third, shows area under the ROC curve for all methods and varying number $n$ of observed sequences in the eye movement domain.
Comparing \ourmethod and \quantilenpl, we observe that the Wasserstein-based loss introduced in Section~\ref{sec:distance_function}, which works on the distributional 
embedding given by the piecewise linear approximations of the quantile functions, clearly outperforms flattening the distributional embedding and using $N$-pair loss.
Comparing \maxnpl with \quantilenpl and \ourmethod shows that using quantile layers improves accuracy compared to max-pooling even if the quantile embedding is flattened (and more so
if Wasserstein-based loss is used).
Classification training (\quantileclass) reduces accuracy compared to metric learning (\quantilenpl).
As expected, AUC increases with the number $n$ of sequences observed at test time.
Figure~\ref{fig:eyev} shows ROC curves in the verification setting for $n=5$. 

\begin{figure}[t]
\centering
\includegraphics[width=0.45\linewidth]{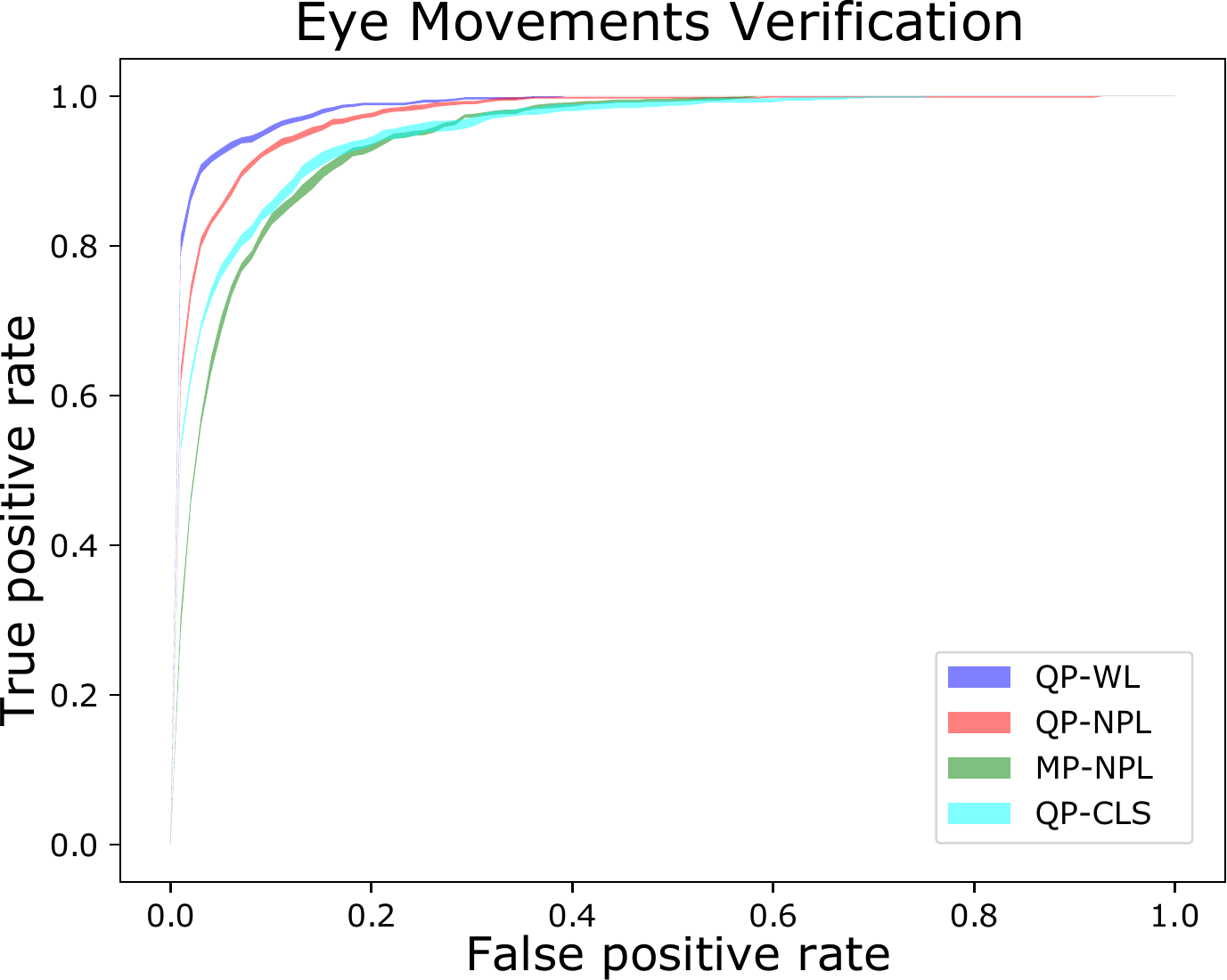}
\caption{
ROC curves in the eye movement domain for all methods using $n=5$ observed sequences. Shaded region in ROC curves indicates standard error. 
}
\label{fig:eyev}
\end{figure}

\begin{figure}[t]
\centering
\includegraphics[width=0.45\linewidth]{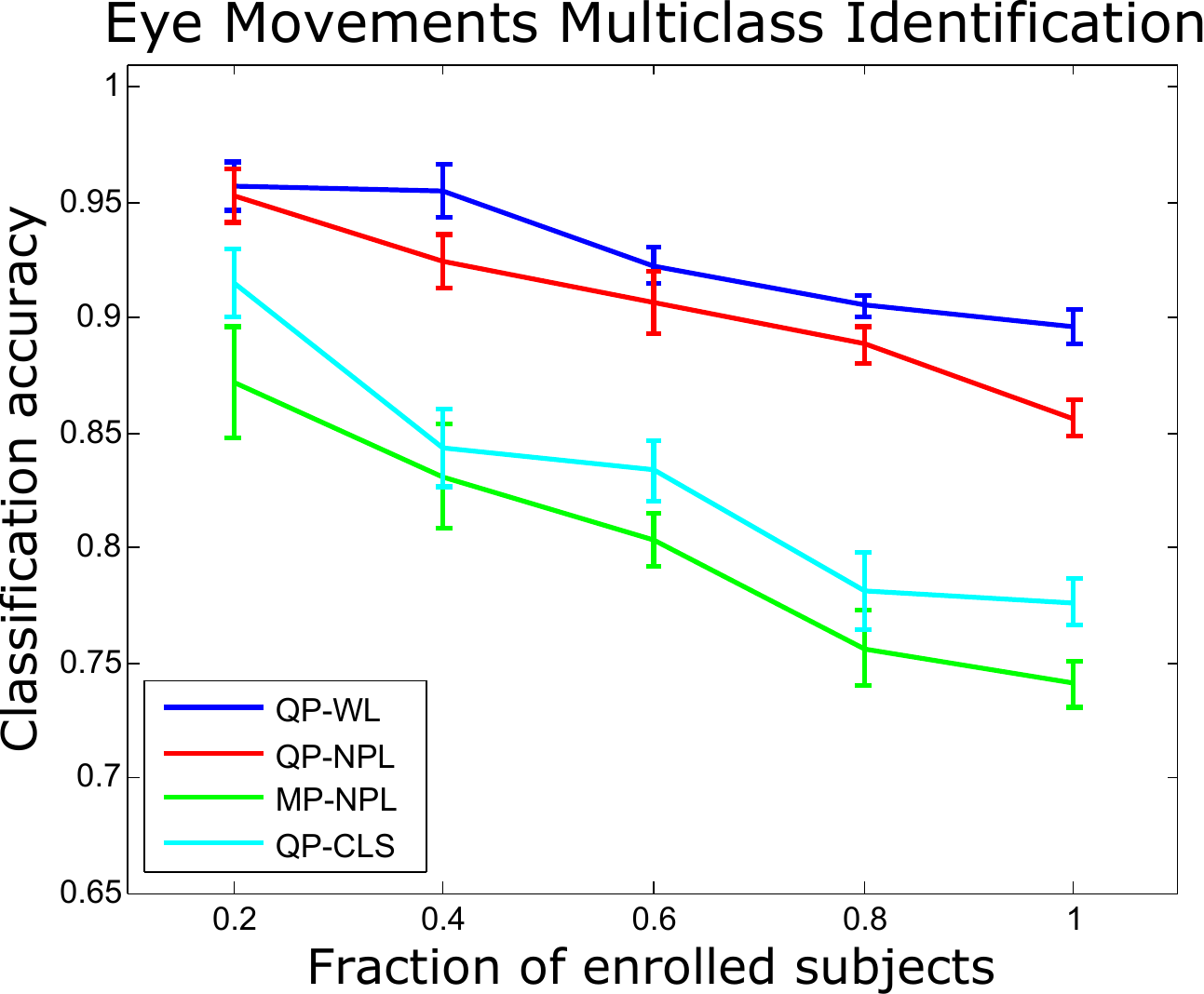}
\hspace{0.3cm}
\includegraphics[width=0.45\linewidth]{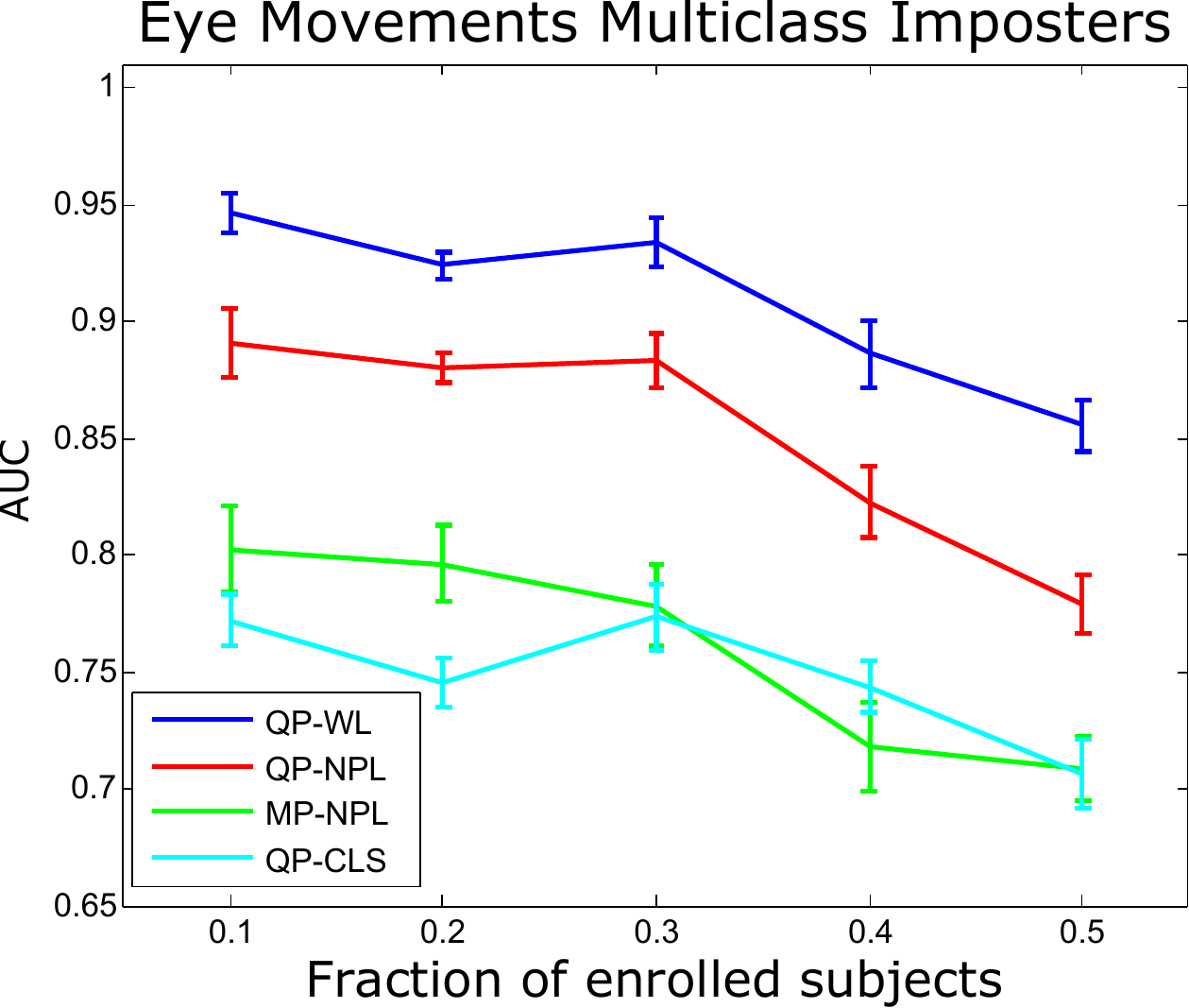}
\caption{
Left: Identification accuracy in the multiclass identification scenario for the eye movement domain and $n=5$ observed test instances as a function of the fraction of subjects that are enrolled.
Right: area under the ROC curve for multiclass imposters as a function of the fraction of subjects enrolled. In the imposter scenario, 50\% of subjects are imposters and therefore never enrolled.
Error bars indicate the standard error.
}
\label{fig:eyei}
\end{figure}

Figure~\ref{fig:eyei} (left) shows multiclass identification accuracy for $n=5$ observed sequences as a function of the fraction of the 105 subjects who are enrolled. 
Relative results for the different methods are similar as in the verification setting. Accuracy decreases slightly when more subjects are enrolled, as the multiclass
problem becomes more difficult. Figure~\ref{fig:eyei} (right) shows the robustness of the model to multiclass imposters as a function of the fraction of the 105 subjects who are enrolled
(up to 50\%, as half of the subjects are imposters). We observe that \ourmethod is much more robust to imposters than the baseline methods.

\begin{figure}[t]
\centering
\includegraphics[width=0.312\linewidth]{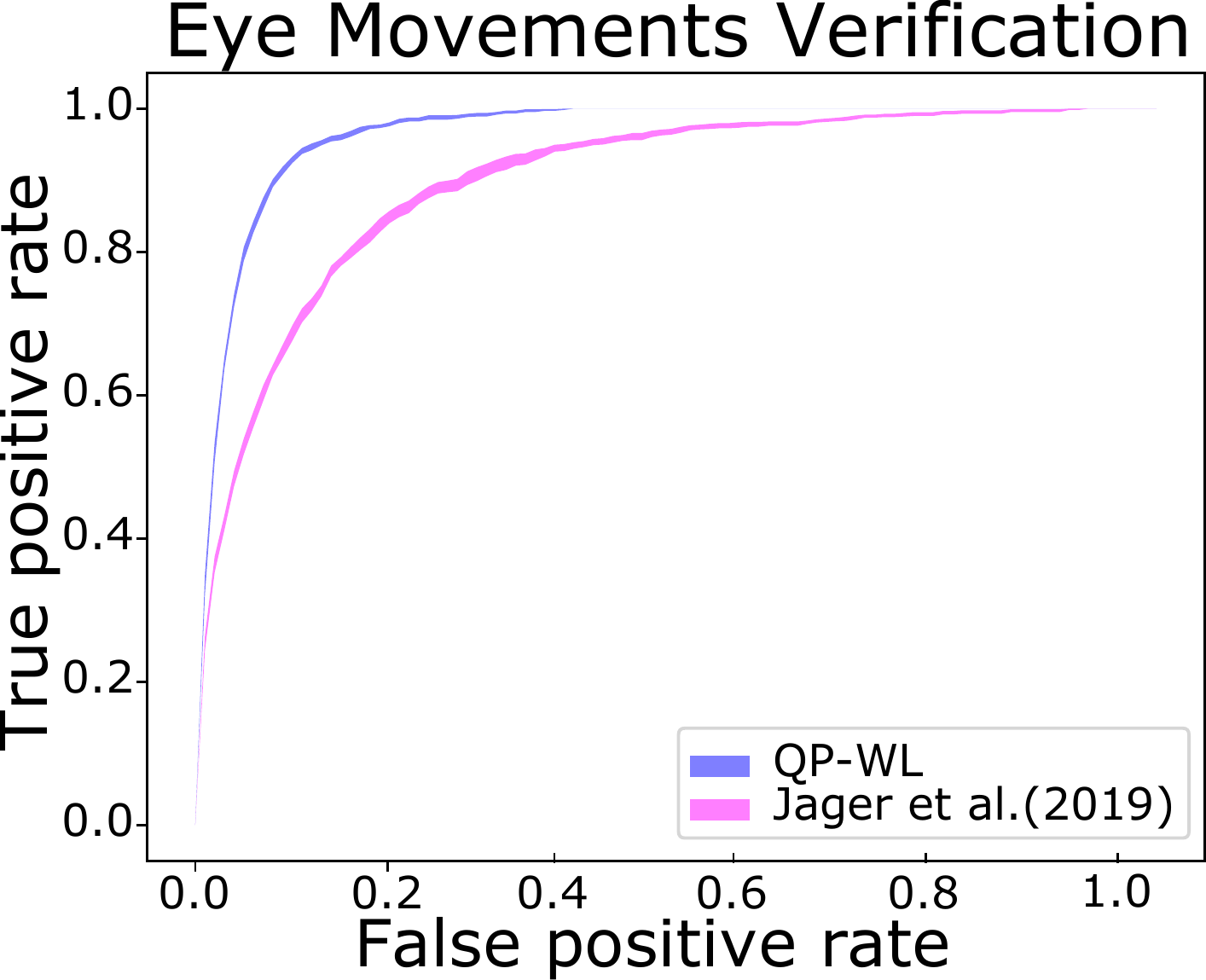}
\hspace{2mm}
\includegraphics[width=0.30\linewidth]{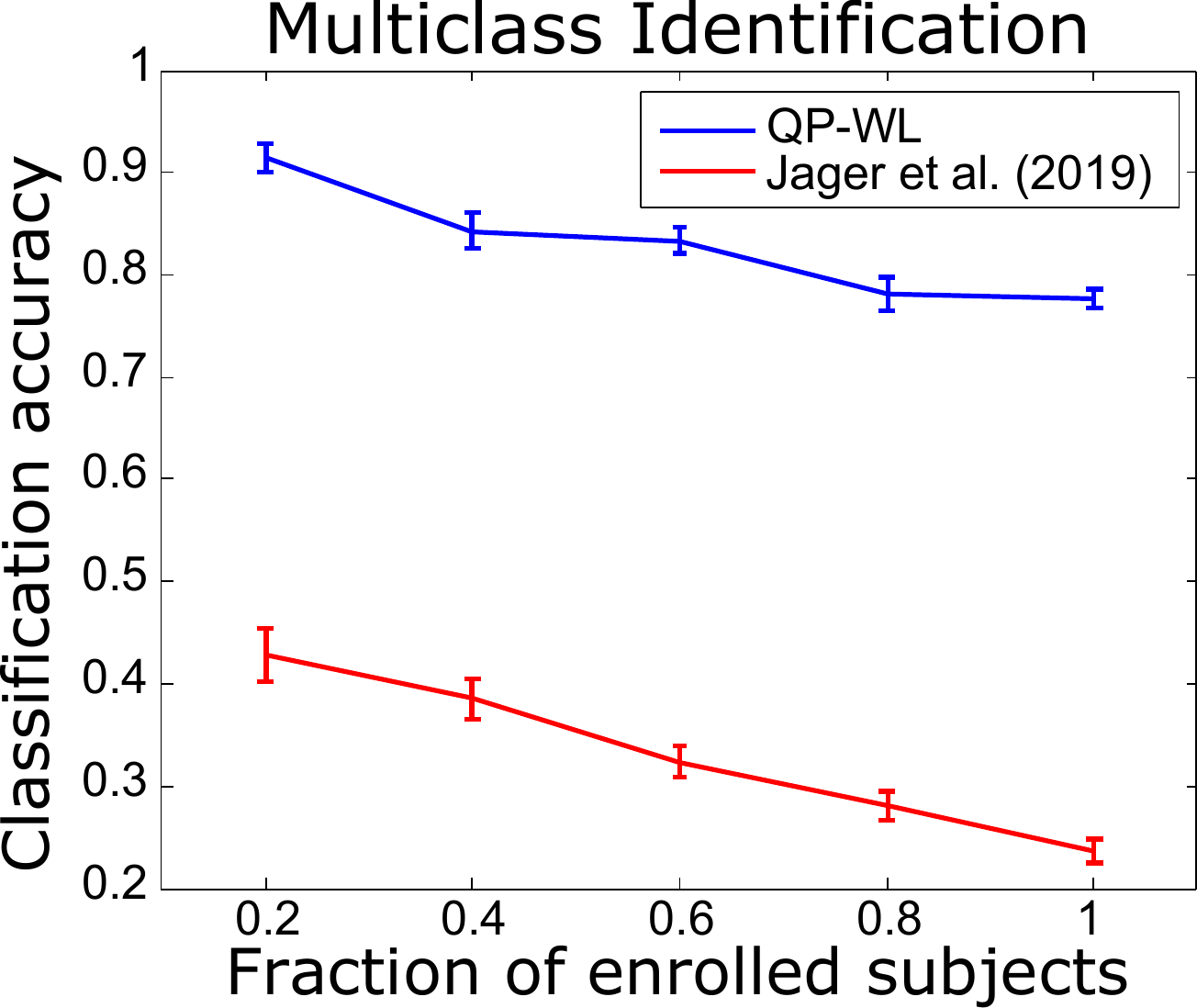}
\hspace{2mm}
\includegraphics[width=0.30\linewidth]{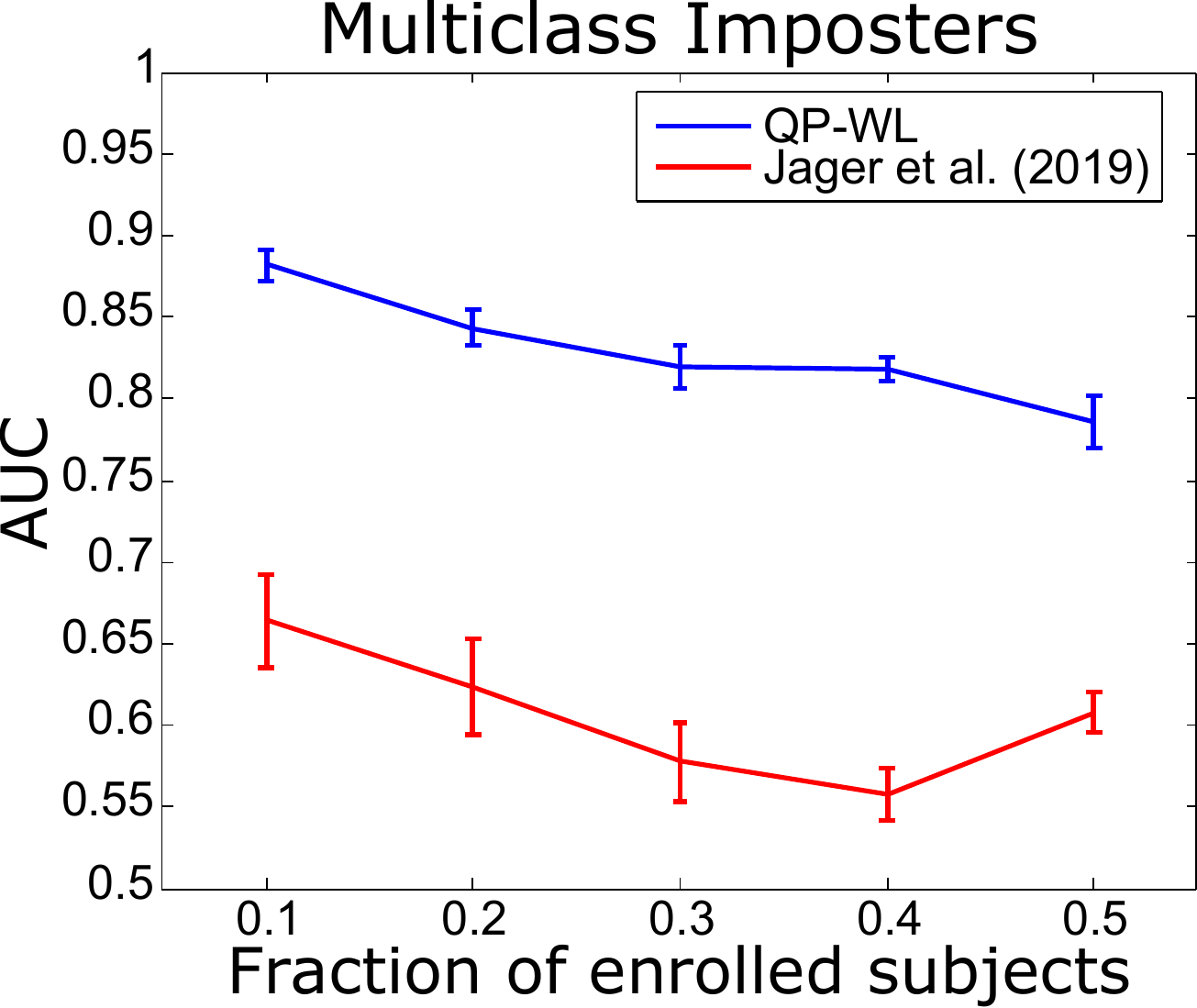}
\caption{
Comparison between \ourmethod and \jager in the eye movement domain: area under ROC curve in verification scenario (left), identification accuracy in multiclass identification scenario (center), and robustness of model to multiclass imposters (right). 
In this experiment, the data is simplified for both methods to match the requirements of \jager, see text for details. 
Results of \ourmethod therefore differ from results presented in Figure~\ref{fig:eyev} and Figure~\ref{fig:eyei}. Error bars indicate the standard error.
}
\label{fig:soa}
\end{figure}

In the eye movement domain, we also compare against the state-of-the-art model by~\cite{jager2019eyedentification}, denoted \jager. 
\jager uses angular gaze velocities averaged over left and right eye as input, which we compute from our raw data. 
We replicate the setting of~\cite{jager2019eyedentification} by training the model using multiclass classification and using the last layer before 
the classification layer as the embedding at test time. The \jager architecture cannot deal with variable-lenght
sequences, we therefore split the variable-length sequences in our data into shorter sequences of fixed length, namely the length of the shortest sequence (27 seconds). 
For a fair comparison, we also simplify the data for our model in this experiment: using only the average gaze point rather than left and right gaze point separately,
removing pupil dilation, and using the same fixed-length sequences. Figure~\ref{fig:soa} shows ROC curves for the verification scenario (left) and identification accuracy
(center) as well as AUC in the imposter scenario for our model \ourmethod on the simplified data and \jager. Comparing to Figure~\ref{fig:eyev} and Figure~\ref{fig:eyei}
we observe that accuracies are reduced for our model by using the simplified data, but the model still outperforms \jager by a wide margin.
We note that the model of~\cite{jager2019eyedentification} is focused on microsaccades, which are likely not detectable in our data due to the low temporal 
resolution (30Hz compared to 1000Hz in the study by~\cite{jager2019eyedentification}), which might explain the relatively poor performance of the model on our data.

\subsubsection{Gait}

Table~\ref{tab:res}, center third, shows area under the ROC curve for all methods and varying number $n$ of observed sequences in the gait domain. We observe the ordering in terms of relative performance between the different methods
as in the eye movements domain, with clear benefits when using the proposed loss function based on Wasserstein distance (\ourmethod versus \quantilenpl), when using quantile layers instead of max-pooling 
aggregation (\ourmethod and \quantilenpl versus \maxnpl), and when using metric learning rather than classification-based training (\quantilenpl versus \quantileclass). Figure~\ref{fig:gaitv} shows ROC curves for verification 
at $n=5$ in the gait domain. Figure~\ref{fig:gaiti} (left) shows identification accuracy as a function of the fraction of subjects enrolled in the gait domain; in this setup the ordering of methods in terms of performance is the same
but the difference between \ourmethod and \quantilenpl less pronounced. Figure~\ref{fig:gaiti} (right) shows robustness to multiclass imposters, with again a clear advantage of \ourmethod over the baselines.

\begin{figure}[t]
\centering
\includegraphics[width=0.45\linewidth]{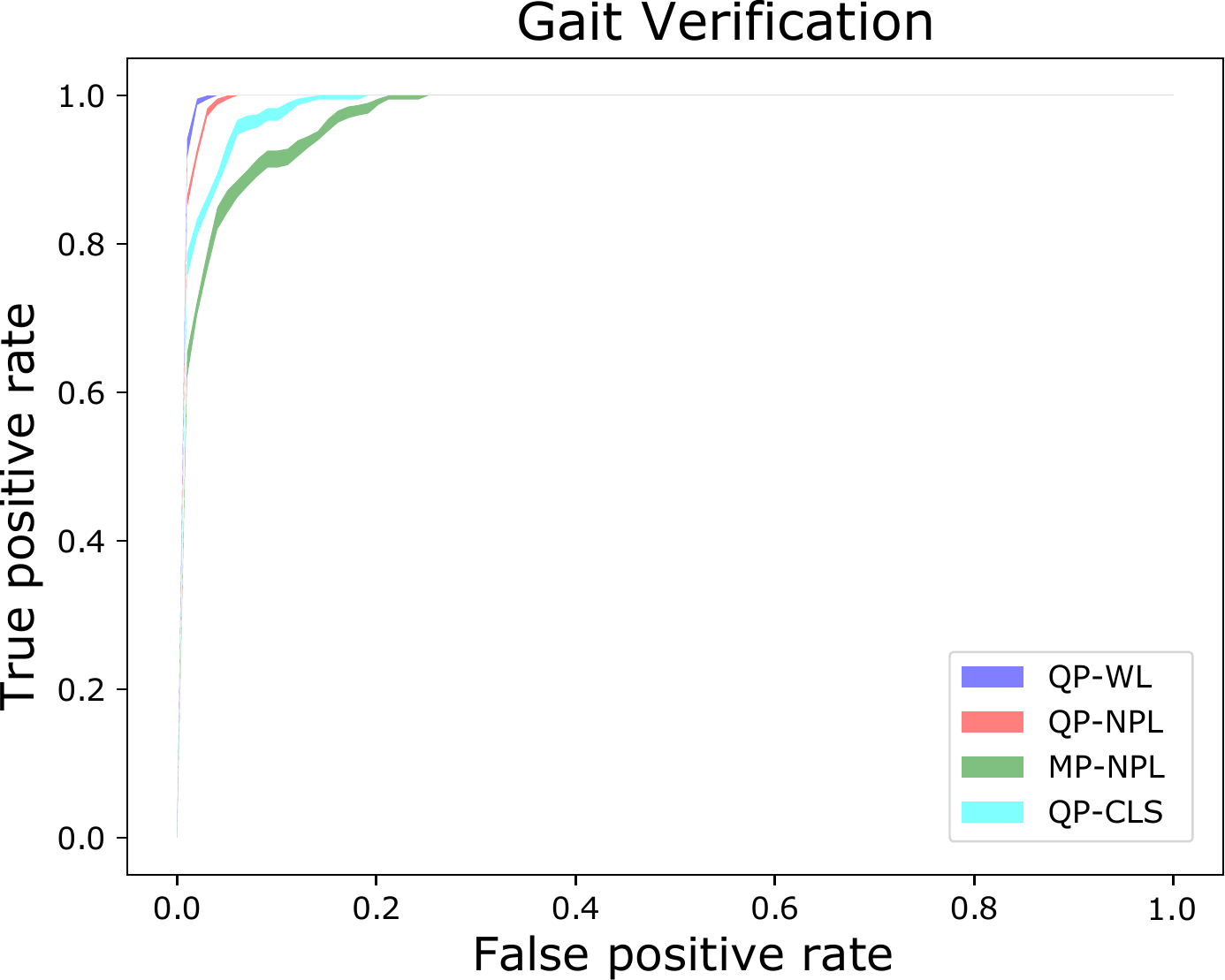}
\caption{
ROC curves in the gait domain for all methods using $n=5$ observed sequences. Shaded region in ROC curves indicates standard error. 
}
\label{fig:gaitv}
\end{figure}

\begin{figure}[t]
\centering
\includegraphics[width=0.45\linewidth]{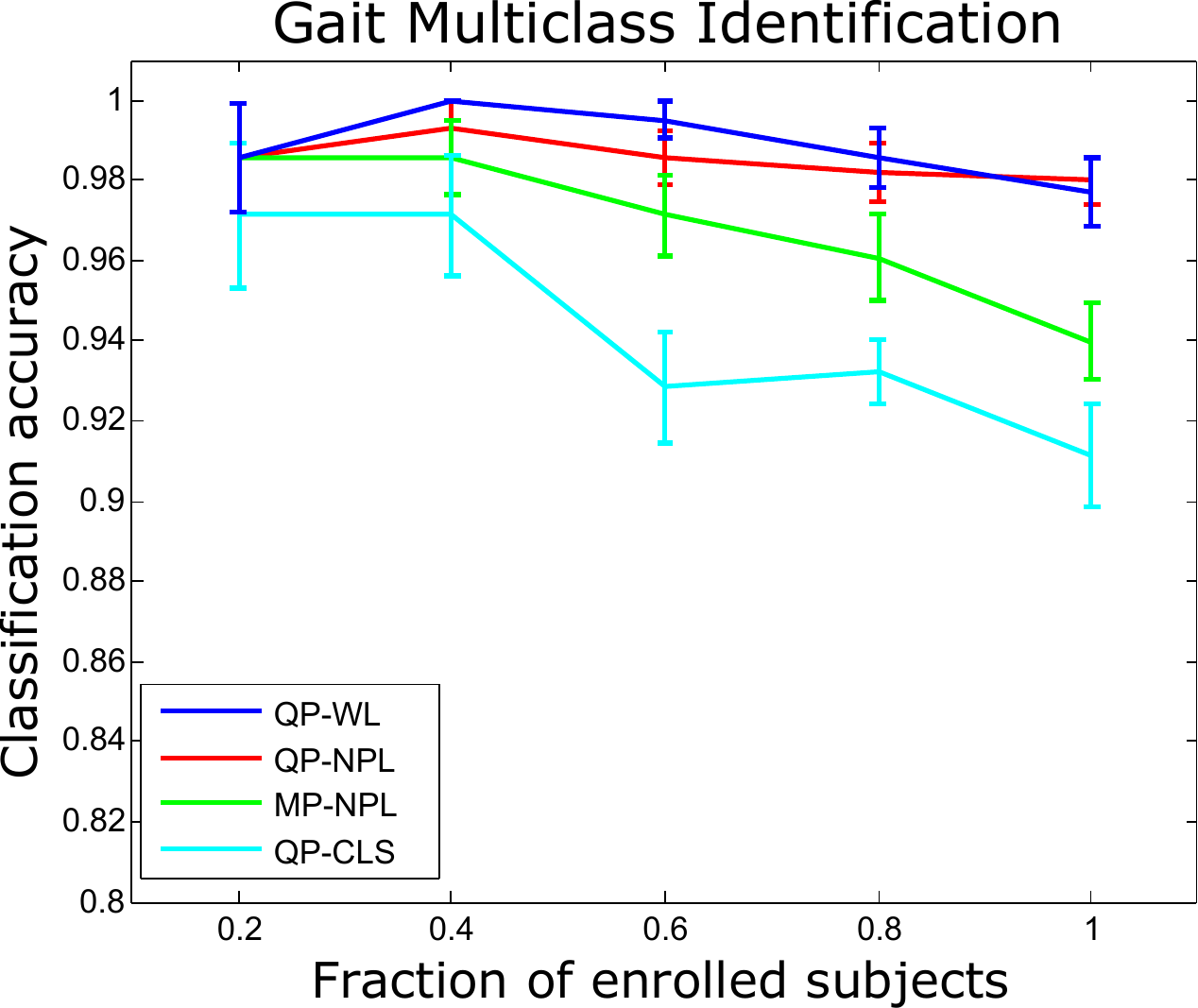}
\hspace{0.3cm}
\includegraphics[width=0.45\linewidth]{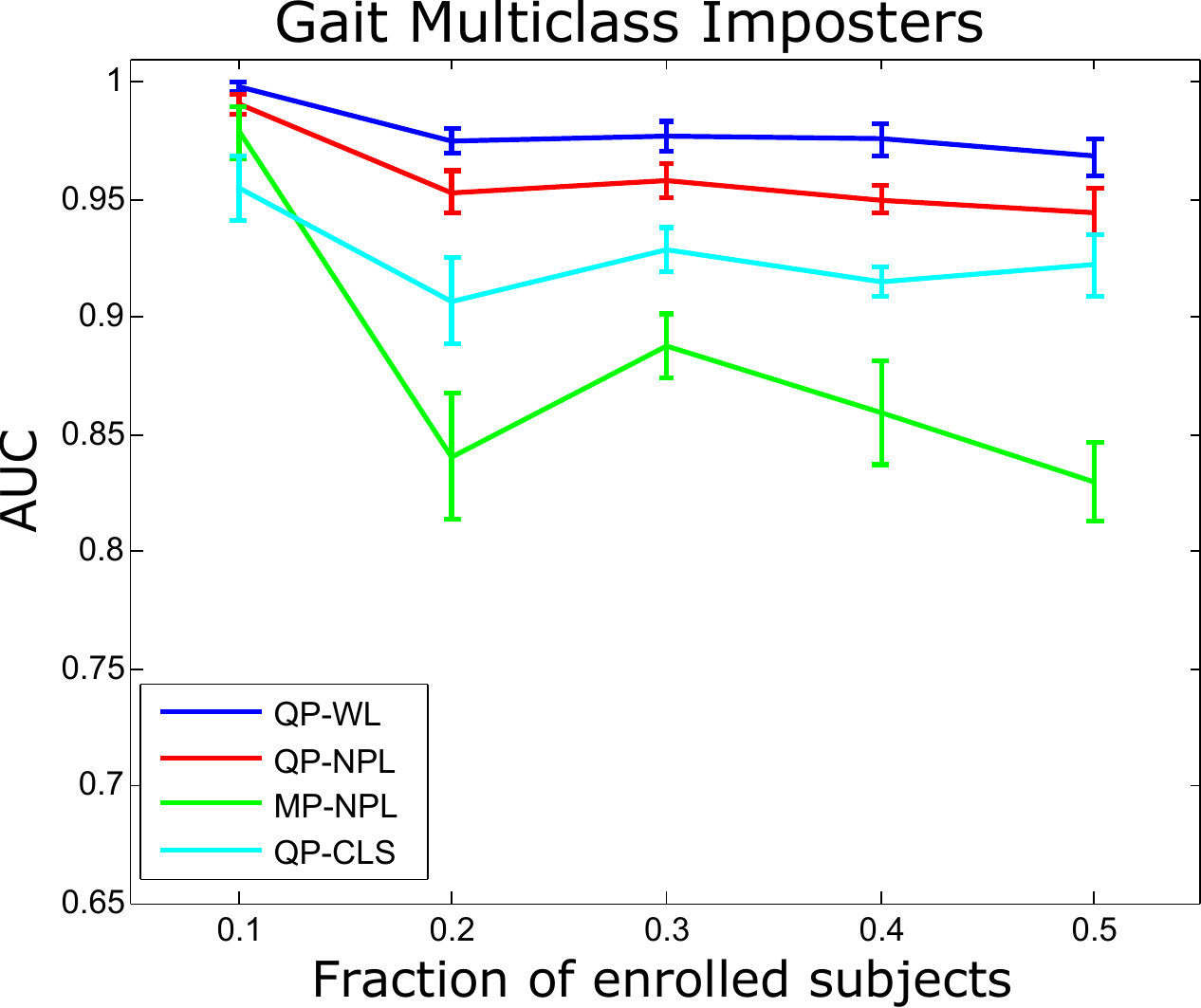}
\caption{
Left: Identification accuracy in the multiclass identification scenario for the gait domain and $n=5$ observed test instances as a function of the fraction of subjects that are enrolled.
Right: area under the ROC curve for multiclass imposters as a function of the fraction of subjects enrolled. In the imposter scenario, 50\% of subjects are imposters and therefore never enrolled.
Error bars indicate the standard error.
}
\label{fig:gaiti}
\end{figure}

\subsubsection{EEG}

\begin{figure}[t]
\centering
\includegraphics[width=0.45\linewidth]{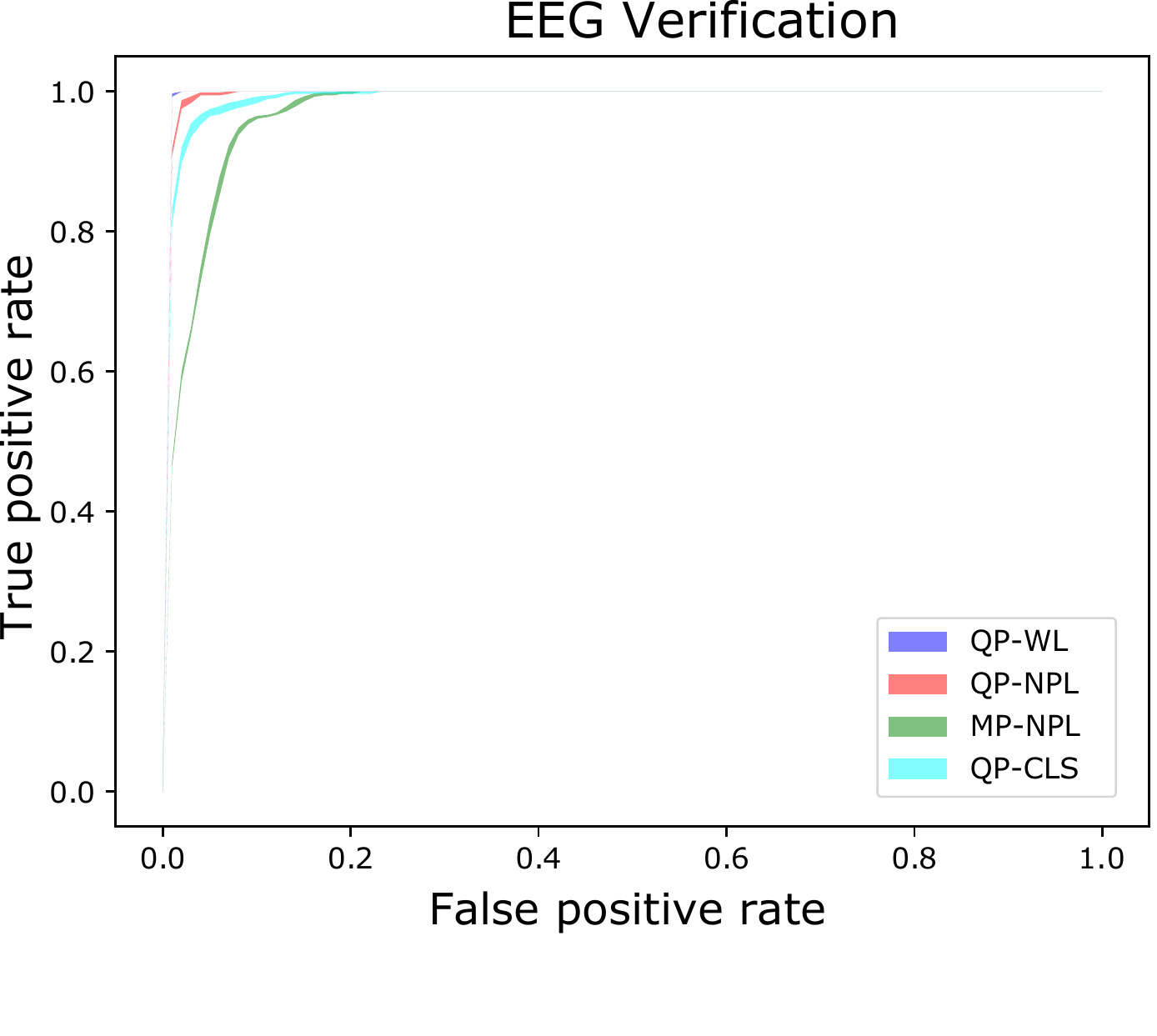}
\caption{
ROC curves in the EEG domain for all methods using $n=5$ observed sequences. Shaded region in ROC curves indicates standard error. 
}
\label{fig:eegv}
\end{figure}

\begin{figure}[t]
\centering
\includegraphics[width=0.45\linewidth]{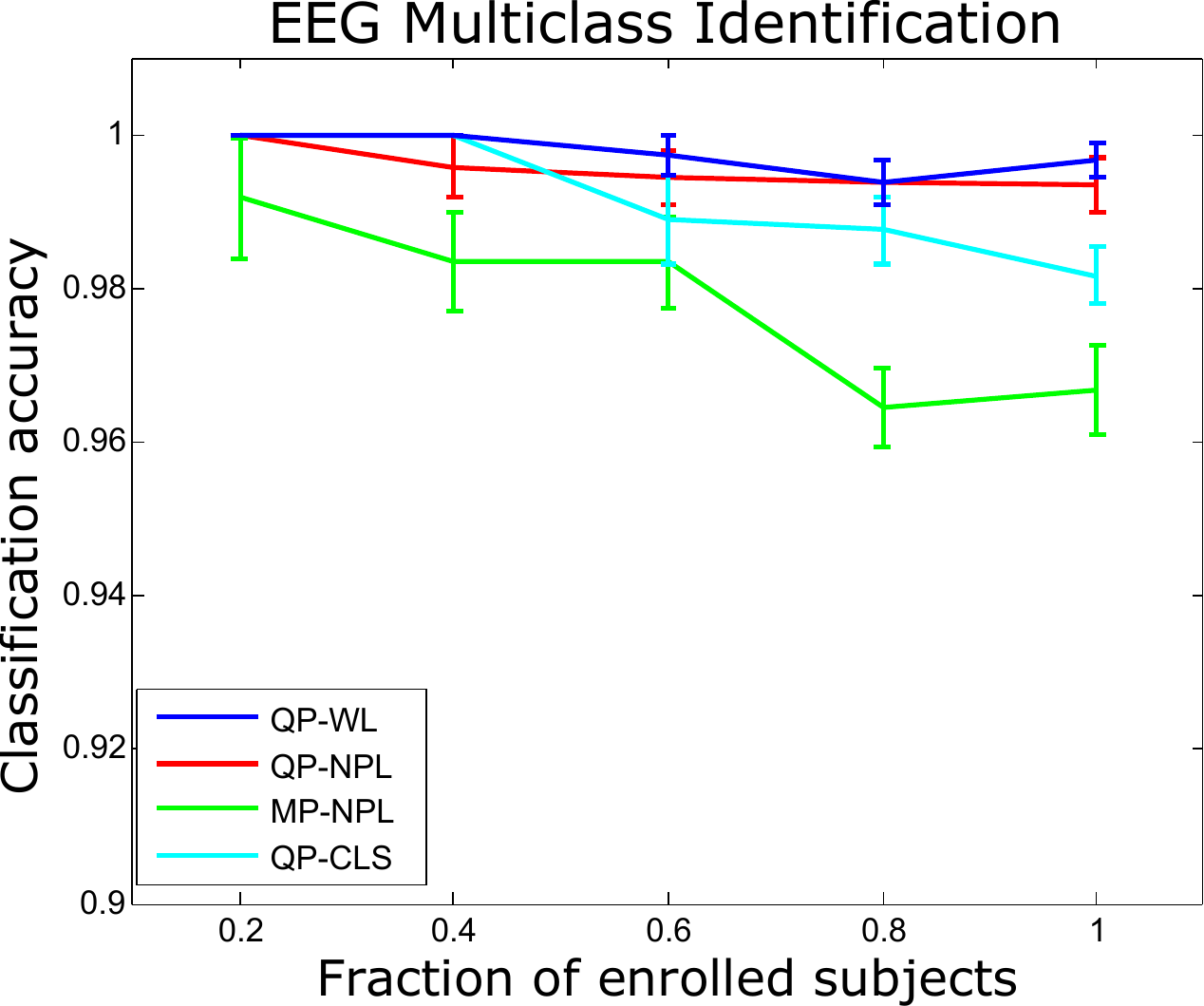}
\hspace{0.3cm}
\includegraphics[width=0.45\linewidth]{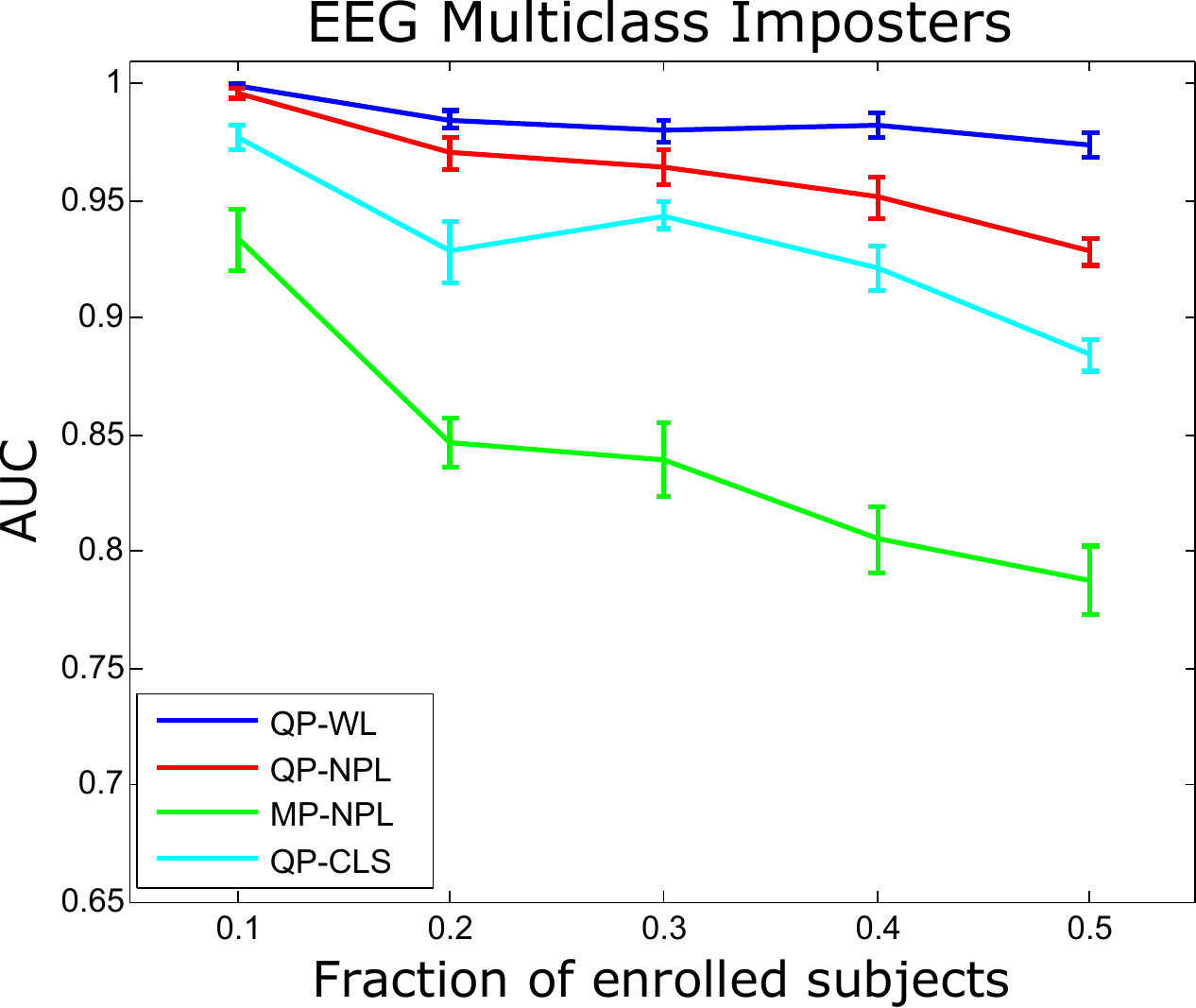}
\caption{
Left: Identification accuracy in the multiclass identification scenario for the EEG domain and $n=5$ observed test instances as a function of the fraction of subjects that are enrolled.
Right: area under the ROC curve for multiclass imposters as a function of the fraction of subjects enrolled. In the imposter scenario, 50\% of subjects are imposters and therefore never enrolled.
Error bars indicate the standard error.
}
\label{fig:eegi}
\end{figure}

Table ~\ref{tab:res}, bottom third, shows area under the ROC curve for all methods and varying number $n$ of observed test sequences in the EEG domain. Relative performance of methods is generally similar as in the other two
domains. \ourmethod clearly outperforms the closest baseline, reducing 1-AUC by between 56\% ($n=1$) and 80\% ($n=5)$. Figure~\ref{fig:eegv} shows ROC curves in the verification setting. 
Figure~\ref{fig:eegi} (left) and Figure~\ref{fig:eegi} (right) show identification accuracy as a function of the fraction of subjects enrolled and robustness of the models to multiclass imposters.
As in the gait domain, differences are more pronounced in the latter setting.

\section{Conclusions}

We developed a model for distributional embeddings of variable-length sequences using deep neural networks. Building on existing work on quantile layers, the
model represents an instance by the distribution of the learned deep features across the sequence. We developed a distance function for these distributional embeddings based on the
Wasserstein distance between the corresponding distributions, and from this distance function a loss function for performing metric learning with the proposed model. 
A key point about the model is end-to-end learnability: by using piecewise linear approximations of the quantile functions, and based on those providing a closed-form 
solution for the Wasserstein distance, gradients can be traced through the embedding and loss calculations. 
In our empirical study, distributional embeddings outperformed standard vector embeddings by a large margin on three data sets from different domains.

%\bibliography{statistical_deep_metric}
\bibliographystyle{spbasic}

\end{document}